\documentclass{article}



\usepackage[preprint]{neurips_2025}



\usepackage[utf8]{inputenc} 
\usepackage[T1]{fontenc}    
\usepackage{hyperref}       
\usepackage{url}            
\usepackage{booktabs}       
\usepackage{amsfonts}       
\usepackage{nicefrac}       
\usepackage{microtype}      
\usepackage{xcolor}         

\usepackage{adjustbox} 
\usepackage{subcaption} 
\usepackage{stfloats}   

\usepackage{algorithm}
\usepackage{algorithmic}

\usepackage{amsmath}
\usepackage{amssymb}
\usepackage{mathtools}
\usepackage{amsthm}
\usepackage{multirow}

\usepackage{cleveref}


\theoremstyle{plain}
\newtheorem{theorem}{Theorem}[section]
\newtheorem{proposition}[theorem]{Proposition}

\theoremstyle{definition}
\newtheorem{definition}[theorem]{Definition}

\newtheorem{application}[theorem]{Application}
\theoremstyle{remark}

\title{Existing LLMs Are Not Self-Consistent\\
For Simple Tasks}

%




\author{%
  \vspace{2.5mm}
  \textbf{Zhenru Lin}$^{1}$ \hspace{6mm} 
  \textbf{Jiawen Tao}$^{1}$ \hspace{6mm} 
  \textbf{Yang Yuan}$^{1,2,3}$\textsuperscript{*} \hspace{6mm} 
  \textbf{Andrew Chi-Chih Yao}$^{1,2,3}$\thanks{Corresponding Author} \\
  $^{1}$IIIS, Tsinghua University \\
  $^{2}$Shanghai AI Laboratory, Shanghai, China \\
  $^{3}$Shanghai Qizhi Institute \\
  \texttt{\{lzr22,taojw23\}@mails.tsinghua.edu.cn} \\
  \texttt{\{yuanyang,andrewcyao\}@tsinghua.edu.cn} \\
}

\begin{document}

\maketitle

\begin{abstract}
Large Language Models (LLMs) have grown increasingly powerful, yet ensuring their decisions remain transparent and trustworthy requires self-consistency --- no contradictions in their internal reasoning. Our study reveals that even on simple tasks, such as comparing points on a line or a plane, or reasoning in a family tree, all smaller models are highly inconsistent, and even state-of-the-art models like DeepSeek-R1 and GPT-o4-mini are not fully self-consistent. To quantify and mitigate these inconsistencies, we introduce inconsistency metrics and propose two automated methods—a graph-based and an energy-based approach. While these fixes provide partial improvements, they also highlight the complexity and importance of self-consistency in building more reliable and interpretable AI. The code and data are available at \url{https://github.com/scorpio-nova/llm-self-consistency}.
\end{abstract}

\section{Introduction}
\label{intro}


Large Language Models (LLMs) have demonstrated remarkable capabilities, solving diverse problems and assisting humans in making informed decisions. However, if we want these decisions to be interpretable and well-justified, LLMs must satisfy at least one fundamental requirement: self-consistency --- the ability to maintain coherent internal reasoning without contradictions.

Consider three points --- A, B, and C --- on a one-dimensional line. Suppose an LLM attempts to reason about their relative positions and asserts that A is to the left of B, and B is to the left of C. However, it also claims that C is between A and B. This creates an obvious contradiction, as such a configuration is logically impossible. If a model arrives at decisions based on incorrect or internally inconsistent reasoning, how can we trust the validity of its conclusions? 

From this perspective, if we want to trust an LLM’s reasoning, we must first ensure that it forms a systematic, complete, and self-consistent understanding of the relationships between relevant entities. For example, any multi-hop reasoning on a structured family tree should be consistent as in Figure~\ref{fig:family-tree}. 
Without such coherence, even correct answers may be unstable, and explanations may lack credibility. This raises a fundamental question: 

\begin{center}
	\fbox{
		\parbox{2.75in}{
			\textbf{How to measure self-consistency in LLMs?
	}}}
\end{center}

\begin{figure}[ht]
    \centering
    \includegraphics[width=\linewidth]{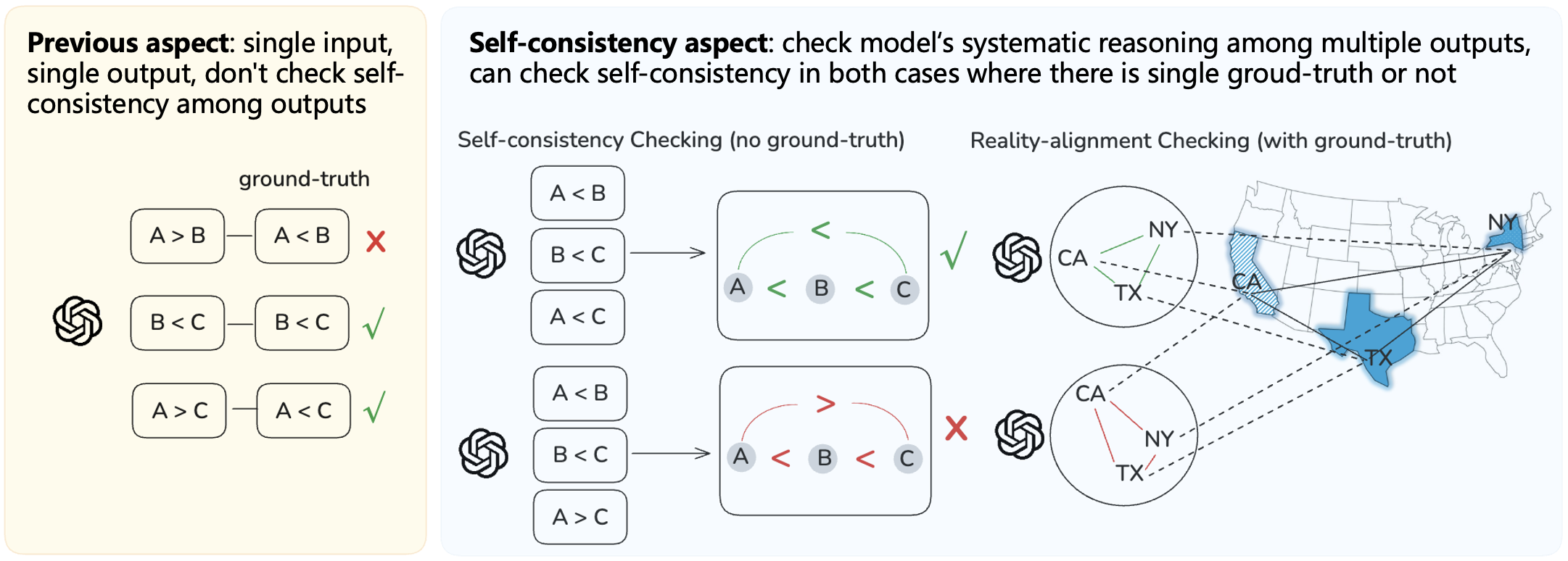}
    \caption{"Self-consistency" and "reality-alignment" are two fundamental prerequisites for LLM interpretability. Self-consistency ensures that relationships between objects remain logically coherent --- if the model asserts that $A<B<C$, then it must also infer that $A<C$. Reality-alignment requires that the relationships and objects perceived by the LLM accurately reflect real-world scenarios. For instance, the model's understanding of the relative positions of California (CA), New York (NY), and Texas (TX) should correspond to their actual geographic locations. Both requirements have categorical interpretations.}
    \label{fig:two-step interpretability}
\end{figure}

\begin{figure}
    \centering
    \includegraphics[width=0.7\linewidth]{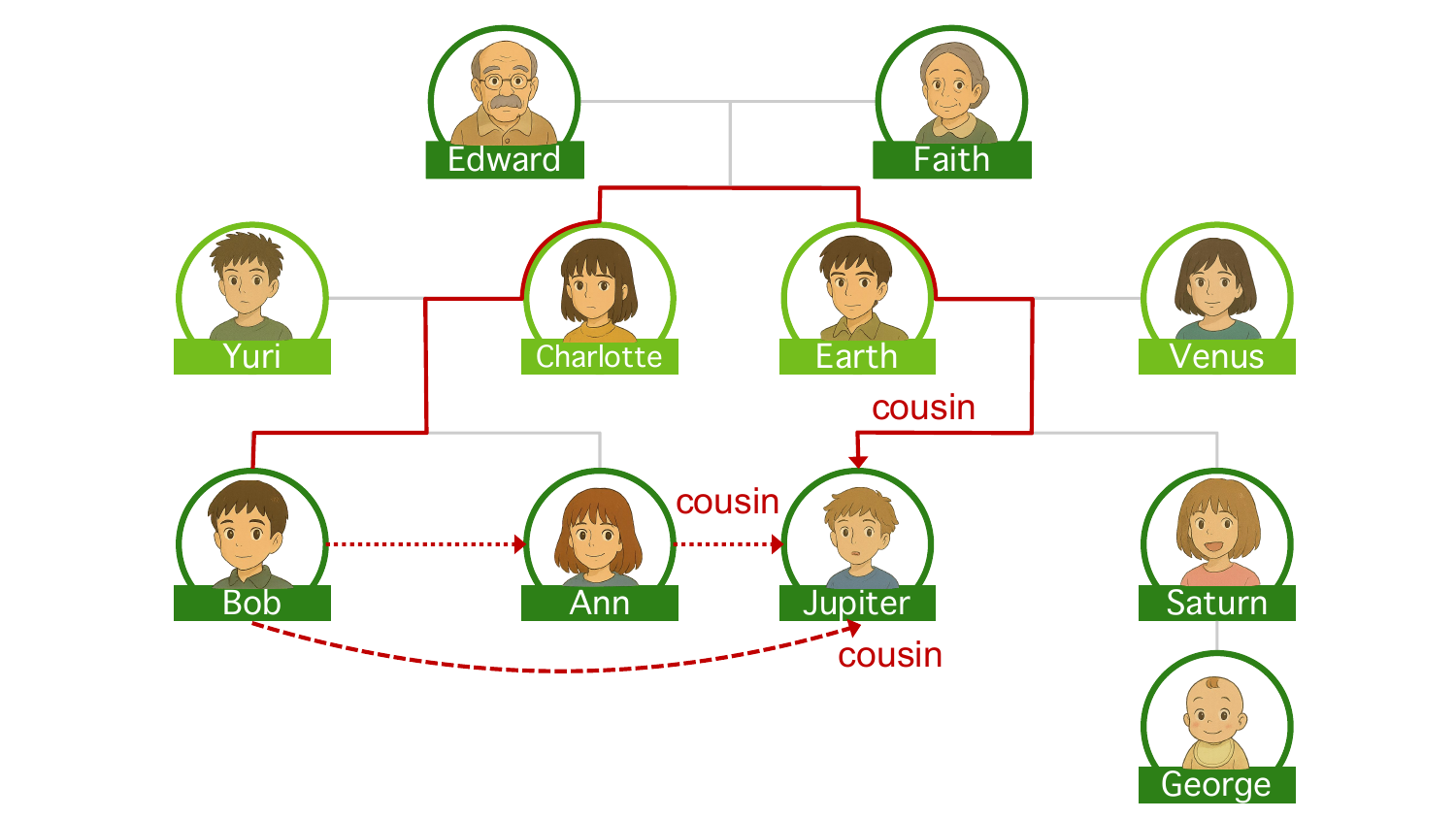}
    \caption{If the model is self-consistent, the multi-hop reasoning results of ``Bob'' and ``Jupiter'' on the family tree should always be the same using different reasoning paths.}
    \label{fig:family-tree}
\end{figure}

In this paper, we leverage a categorical perspective~\citep{Yuan23Power} to quantify the self-consistency of LLMs. To the best of our knowledge, this is the first attempt to systematically measure a model’s self-consistency\footnote{Previous works have proposed various definitions of self-consistency. Our approach introduces the first categorical-based definition of self-consistency, reflecting the model's systematic and internal understanding of the world and structures.} rather than merely assessing its performance on specific tasks.

As an initial attempt, we abstract temporal, spatial and kinship relationships as a total order relationship and propose a method for evaluating self-consistency. 
Our results on 7B to 16B models show high inconsistency rates, and even state-of-the-art models, including GPT-o4-mini and DeepSeek R1, show improved self-consistency but do not achieve complete self-consistency.

Moreover, self-consistency alone is not sufficient. A model could construct a meticulously coherent “world” in which every country on Earth is located at the North Pole and covers exactly one square meter --- internally consistent, but grossly misaligned with reality, which is useless. Using category-theoretic terminology, this means that the model’s internal category should be structurally preserved when mapped to the real-world category via a functor.
This brings us to another question:

\begin{center}
	\fbox{
		\parbox{2.3in}{
        \centering
			\textbf{
            How can LLMs be both \\self-consistent and reality-aligned?
	}}}
\end{center}

In this work, we propose two methods to address these challenges, yet neither achieves a perfect solution. Insights from category theory suggest that achieving full self-consistency may be inherently difficult for the current architecture of LLMs, 
because the next-token prediction framework naturally forms a forward-directed language category, lacking the necessary backward edges to enforce consistency. We hope our findings will underscore self-consistency as a fundamental challenge in LLM research and inspire further exploration toward models that can reason more coherently and reliably.

In summary, our key contributions are as follows:
\begin{itemize}
\item We proposed a new evaluation metric that reveals inconsistency in state-of-the-art LLMs.

\item We established a two-step requirement for interpretability: self-consistency for internal reasoning and reality-alignment for meaningful explanations.

\item We proposed two methods to improve self-consistency and reality-alignment.
\end{itemize}

\section{Background \& Motivation}
\label{sec:prelim}
\subsection{Self‑Consistency for Interpretability}
A model that contradicts itself cannot provide a stable explanation.
Hence interpretability requires \emph{(i) self‑consistency} and
\emph{(ii) alignment with reality}.  We treat consistency first and use
alignment only as a pilot study later.

\subsection{Category‑Theory View}
In particular, binary relations can be treated as morphisms in a small
category in which composition corresponds to transitivity; under this
perspective, \emph{morphism composition law} aligns exactly with
\emph{self-consistency}. Category theory is often considered abstract and hard to understand, so we keep the intuition here and defer full definitions to
Appendix~\ref{app:ct} for curious readers. If you are unfamiliar with the subject, feel free to skim or skip this part — the subsequent sections rely only on the concept of \emph{self-consistency}, not on category-theoretic machinery.

\subsection{Directed Graph}
Our inconsistency metric calculation is given under the definition of the directed graph. In our experiment for graph fixing algorithm, we use the notion of Tarjan's algorithm and simply-ordered graph.
\begin{definition}
\label{def:directed_graph}
A \textbf{directed graph} is a pair of sets $G = (V, E)$, where $V$ is a set of nodes, and $E \subseteq V \times V$ is a set of directed edges. Each edge in $E$ connects an ordered pair of nodes, denoted as $(u, v)$, where $u, v \in V$. 
\end{definition}

\begin{definition}
\label{def:scc}
A \textbf{strongly connected component (SCC)} in a directed graph $G = (V, E)$ is a maximal subgraph $G' = (V', E')$ such that for every pair of nodes $u, v \in V'$, there exists a directed path from $u$ to $v$ and a directed path from $v$ to $u$. In other words, every node in an SCC is reachable from every other node in the same SCC. 
\end{definition}

\begin{definition}
\label{def:tarjan_algorithm}
\textbf{Tarjan’s algorithm} \citep{tarjan} is a method for finding strongly connected components (SCCs) in a directed graph $G = (V, E)$.
\end{definition}

\begin{definition}
\label{def:simply_ordered_graph}
A directed graph $G = (V, E)$ is called a \textbf{simply-ordered graph} if there exists an order of $V$ such that 
    $E = \{$every edge goes from a lower-ordered node to higher-ordered node$\}$.
\end{definition}

\section{Inconsistency Score: A New Self-Consistency Metric}
\label{sec:metric}

Systematic reasoning requires that the composition of atomic binary relations
(e.g.\ ``before'', ``father'') never contradicts itself.
We test whether an LLM respects these constraints in three domains—
\textbf{time}, \textbf{space}, and \textbf{kinship}—whose primitive relations
and composition to check are summarised in Table \ref{tab:relations}. We check the necessary conditions for satisfying the composition law.

\begin{table}[h]
\centering
\begin{tabular}{@{}lll@{}}
\toprule
\textbf{Task} & \textbf{Primitive relations} & \textbf{Composition check} \\ \midrule
Time &
\textit{before}, \textit{after} &
$\textit{before}\,\circ\,\textit{before} = \textit{before}$ \\[2pt]

Space &
NE,\;NW,\;SE,\;SW &
$\text{NE}\,\circ\,\text{NE} = \text{NE},
  \text{NE}\,\circ\,\text{NW} = \text{NE or NW}$ \\[2pt]

Kinship &
father,\;mother,\;sister,\;son... &
$\text{father}\,\circ\,\text{father} = \text{grandfather},
  \text{sister}\,\circ\,\text{mother} = \text{aunt}$ \\ \bottomrule
\end{tabular}
\vspace{2mm}
\caption{Primitive binary relations and the compositions they must respect.}
\label{tab:relations}
\end{table}

Formally, let $\mathcal M$ denote all $N(N-1)$ ordered pairs among $N$ objects. Given an LLM prediction $A \subseteq \mathcal M$ and a trusted context $C \subseteq \mathcal M$, we define inconsistency as the minimum edit to any composition-consistent extension $B \supseteq C$:

\[
I(A; C) = \frac{\min_{B \in \mathcal S(C)} |A \setminus B|}{|\mathcal M|},\quad \mathcal S(C) = \{B \supseteq C \mid B \text{ is composition-consistent}\}.
\tag{*}
\]

\paragraph{From model outputs to graphs.}
For every ordered pair $(i,j)$ the LLM predicts a primitive relation.
We encode these answers as directed edges: For time (1D), a single graph $G_t=(V,E_t)$ where $(i\!\to\!j)\in E_t$ iff the model asserts $t_i<t_j$ ($i$ happens before $j$).
For space (2D), “A is \textsc{NE} of B’’ really says two things at once:\(x_A<x_B\) (east–west order) and \(y_A<y_B\) (north–south order). Thus, we construct two graphs: $G_x$ for the $x$-axis and $G_y$ for the $y$-axis.
For kinship, A graph $G_k=(V,E_k)$ whose edges follow the genealogy conventions of Table~\ref{tab:relations}.

\paragraph{Loops Indicate Inconsistency.}
If the model were perfectly consistent, we could order the nodes such that all edges point forward.
Inconsistency arises when the graph contains a \emph{cycle}—for example, A $\to$ B, B $\to$ C, C $\to$ A—indicating the model contradicts itself.

We can therefore quantify inconsistency by ``the proportion of edges that must be removed to eliminate all cycles''. However, computing the exact minimum feedback arc set is NP-complete \citep{Karp1972}, so we use approximations.

\paragraph{Picking a simply ordered graph and counting reverse edges.} 

Given an axis graph $G=(V,E)$ we can sorting based on the in-degree of node $i$ and obtain the order
$\pi:V\!\to\!\{1,\dots,N\}$ (Detailed algorithm and the theoretical guarantee can be found in Appendix~\ref{app:graph}). Then reverse edge is defined as the edges that contradict this order ($\pi(i)>\pi(j)$).

(i) When $C=\varnothing$, this reduces to classical “edit-to-consistency”.Let $E_{\mathrm{rev}}$ be the set of reverse edges.
Since there are at most $\tfrac12 N(N{-}1)$ reversals to make any graph acyclic, we have $|E_{\mathrm{rev}}| \le \tfrac12 N(N{-}1)$. 
(ii) When $C$ fully specifies the ground truth, those relations are fixed and cannot be changed; in this case, we count the number of model predictions that contradict $C$, which gives the error rate, so it becomes plain error rate. 
Figure~\ref{fig:score} presents the calculation of the self-consistency score for different cases of $C$, as an example for the following equation. 

\begin{figure}
    \centering
    \includegraphics[width=\linewidth]{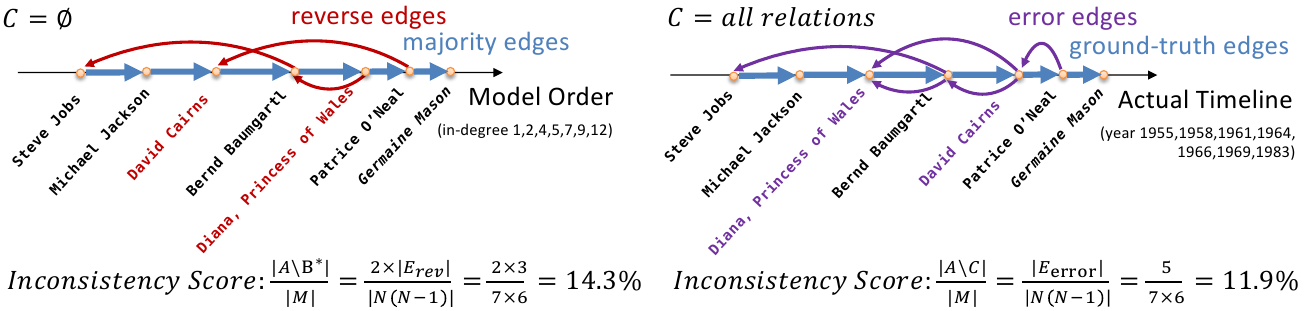}
    \caption{An example for calculating the inconsistency score for LLM's responses regarding people's birth dates comparison. When no context or ground-truth is given, we rank the objects by their in-degree, and the inconsistency score is based on the proportion of the reverse edges. If ground-truth is given, the exact order can be different from the majority order specified by the model's outputs (e.g. ``David'' and ``Diana'' should exchange positions), the inconsistency score becomes the proportion of error edges. Note that there can be multiple edges (e.g. ``$A$ is before $B$'', ``$B$ is after $A$'' produce two edges $(A,B)$.}
    \label{fig:score}
\end{figure}

\[
I(A;C) =
\left\{
\begin{array}{ll}
\displaystyle \frac{|A \setminus B^\star|}{|\mathcal{M}|} = \frac{|E_{\mathrm{rev}}|}{N(N-1)/2} & \text{if } C = \varnothing \quad \text{(edit-to-consistency)} \\[10pt]
\displaystyle \frac{|A \setminus \bar{C}|}{|\mathcal{M}|} = \frac{|E_{\mathrm{error}}|}{N(N-1)} & \text{if } C \text{ uniquely determines } \bar{C} \quad \text{(error rate)}
\end{array}
\right.
\tag{**}
\]

\paragraph{Key takeaways}
Equation (*) offers a single, task‑agnostic metric: \textbf{the smaller the score, the closer the LLM is to a self-consistent category.}
Equation (**) gives two specific cases to calculate the score that are used throughout our following experiments.

\section{Self-Consistency Evaluation for trending LLMs}
\label{sec:eval}
\subsection{Experimental Settings}
\paragraph{Datasets} We reuse the three 20-object temporal sets (\textsc{Art},
\textsc{Ancient}, \textsc{Recent}) and two U.\,S. geography sets
(\textsc{US-City} with 20 cities, \textsc{US-State} with 51 states)
from \citet{llm_space_time} as real-world data. We also construct some synthetic data, including a 20‑point plane, and an 11‑person four‑generation kinship tree. Detailed data statistics and examples can be found in Appendix~\ref{app:data}. 

\paragraph{Models} We adopt 12 models: GPT-o4-mini\citep{gpt-o4-mini}, GPT-4o\citep{gpt-4o}, GPT-o1-mini\citep{gpto1}, Qwen3-235B-A22B, Qwen3-30B-A3B\citep{yang2025qwen3}, DeepSeek R1, DeepSeek-R1-Distill-Qwen-7B, DeepSeek-R1-Distill-Llama-8B \citep{deepseekai2025deepseekr1incentivizingreasoningcapability}, DeepSeek V3 \citep{dsv3}, DeepSeek-V2-Lite-Chat \citet{dsv2}, Llama-3-8B-Instruct \citep{llama3modelcard} and Qwen2-7B-Chat\citep{qwen2}. We test models within 16B with their employment on an A800 machine to run less than 5 hours, using the Llama-Factory framework \citep{zheng2024llamafactory} with default hyperparameter configurations. We test the larger-scale models by calling APIs. The implementation details, including prompts and some case studies, can be found in Appendix~\ref{app:implementation}.

\subsection{Experimental Results: LLMs are still not self-consistent for simple 1D/2D/multi-hop reasoning}

This task aims to explore the following questions: Are current LLMs self-consistent? If not, to what extent is it not self-consistent? Do they exhibit different consistency on different structures?

We have some key findings through our experiments:
\begin{itemize}
    \item No models are completely self-consistent on simple 1D, 2D, and multi-hop reasoning tasks, when handling binary relations for 11 to 51 objects.
    \item All smaller models exhibit high inconsistency, including DeepSeek-R1-Distill-Qwen-7B and DeepSeek-R1-Distill-Llama-8B. Though the larger reasoning models like DeepSeek-R1 and GPT-o4-mini perform better, they're still hard to reach complete self-consistency.
    \item Graph method and EBM method calculate consistency scores from two directions respectively, and the final results have a positive correlation, which can confirm each other to effectively evaluate the degree of self-consistency of model responses. 
    \item When the error is sparse, we can use the graph method and EBM method to effectively fix the inconsistency and recover the ground-truth.
\end{itemize}

The detailed experimental results analyses are provided below.

\subsection{Self-consistency without context given}
The results are displayed in Table~\ref{tab:real}. Models can be divided into two groups: Larger models GPT-4o, o1-mini, DeepSeek-V3, and DeepSeek-R1 can exhibit $<20\%$ inconsistency score on most datasets; other smaller models and Qwen3-235B-A22B have at least $17.9\%$ inconsistency score and can reach up to $89.70\%$. 

\textbf{Temporal (1D)} For open-source models of different sizes, there's no clear relationship between the model size and its inconsistencies. For example, Llama3-8B outperforms the larger DeepSeek-V2-16B and the smaller R1-distill-Qwen-7B on all 1D temporal datasets.

\textbf{Spacial (2D)} Interestingly, the model's performance varied between east-west comparison and north-south comparison. This is because many LLMs make mistakes on the east-west comparison on minus longitudes (e.g., comparing $-82^\circ E$ and  $-91^\circ E$).

\begin{table}[ht]
  \centering
  \small
  \begin{adjustbox}{max width=0.7\textwidth}
  \begin{tabular}{lcccccc}
    \toprule
    \multirow{2}{*}{\textbf{Model}} &
      \multicolumn{3}{c}{\textbf{Temporal 1-D} $\downarrow$} &
      \multicolumn{2}{c}{\textbf{US Geo 2-D} $\downarrow$} \\
    \cmidrule(lr){2-4}\cmidrule(lr){5-6}
      & Art & Ancient & Recent
      & City (x/y) & State (x/y) \\
    \midrule
    GPT-4o          &  4.21 & 13.68 &  1.07 & \textbf{0.0/0.0} & 0.9/1.9 \\
    o1-mini         & 21.93 & 36.84 & 14.48 &  4.7/3.7 & 2.3/2.7 \\
    DeepSeek-V3     &  8.42 & 18.47 &  4.22 & \textbf{0.0}/1.1 & \textbf{0.4/0.9} \\
    DeepSeek-R1     & \textbf{3.68} & \textbf{9.50} & \textbf{1.05} & \textbf{0.0}/1.1 & \textbf{0.4/0.9} \\
    GPT-o4-mini     & 14.74 & 14.74 & 3.68 & 0.53/1.05 & 0.7/1.8 \\
    \midrule
    DeepSeek-V2-16B & 56.38 & 83.73 & 64.36 & 82.4/59.4 & 89.7/68.5 \\
    Llama-3-8B      & 23.94 & \textbf{25.93} & \textbf{27.03} & 82.2/25.5 & 88.2/24.0 \\
    R1-Dist-Llama-8B& 55.79 & 55.79 & 32.11 & \textbf{37.7}/23.2 & \textbf{35.4}/28.7 \\
    Qwen2-7B        & 23.22 & 56.08 & 50.13 & 83.9/\textbf{17.9} & 73.8/\textbf{20.7} \\
    R1-Dist-Qwen-7B & 63.68 & 61.05 & 49.60 & 62.6/43.5 & 58.4/35.7 \\
    Qwen3-30B-A3B   & 60.00 & 67.37 & 64.21 & 62.63/63.68 & 68.6/70.6 \\
    Qwen3-235B-A22B & \textbf{21.16} & 57.14 & 62.96 & 74.8/75.1 & 79.5/79.0 \\
    \bottomrule
  \end{tabular}
  \end{adjustbox}
  \caption{Inconsistency (\%) on all \emph{real-world} sets (lower is better).  
  For US City/State the two numbers are east-west / south-north inconsistency.}
  \label{tab:real}
\end{table}

\subsection{Self-consistency with ground-truth given}

\paragraph{2D fictional geographical relations} We have explored LLM's performance under three different ways of providing groud-truth context.
\begin{itemize}
    \item Provide XY Positions (XY Pos.): Give the (x, y) coordinates for all objects.
    \item Provide Relationships to a Central Element (Center-Rel.): One central element and all other quantitative relationships. ($\text{Object}_1$ to $\text{Object}_0$ is east (5 units) and north (2 units), $\text{Object}_2$ to $\text{Object}_0$ is ...).
    \item Provide Relationships by Ordered Pairs (Ordered-Rel.): Provide the successive quantitative relationships from $0$ to $n-1$ ($\text{Object}_0$ to $\text{Object}_1$, $\text{Object}_1$ to $\text{Object}_2$,...).
\end{itemize}

\paragraph{Kinship (Multi-hop)}

The task is to infer all gendered kinship relations in a four‐generation family tree from only 10 seed relations among 11 individuals (e.g., “A is the father of B”, “D is the wife of C”).  The 4‐generation tree has one couple in the highest generation, two couples in the second, multiple siblings in the third, and one daughter in the fourth.

\begin{table}[ht]
  \centering
  \small
  \begin{adjustbox}{max width=0.77\linewidth}
  \begin{tabular}{lcccc}
    \toprule
    \multirow{2}{*}{\textbf{Model}} &
      \multicolumn{3}{c}{\textbf{2-D Fictional Map} $\downarrow$} &
      \multirow{2}{*}{\textbf{Kinship} $\downarrow$} \\[-1pt]
    \cmidrule(lr){2-4}
      & XY Pos.\,(x/y) & Center-Rel.\,(x/y) & Ordered-Rel.\,(x/y) & \\
    \midrule
    GPT-4o          & \textbf{0.00}/\textbf{0.00} &  4.74/3.68 & 72.73/72.19 & 13.73 \\
    o1-mini         &  0.53/0.53                &  0.53/\textbf{0.00} & \textbf{6.32}/\textbf{6.32} & 8.82 \\
    DeepSeek-R1     & \textbf{0.00}/\textbf{0.00} & \textbf{0.00}/\textbf{0.00} & 26.70/26.16 & 9.80 \\
    DeepSeek-V3     & 51.05/58.95 & 31.05/43.68 & 81.28/73.94 & 10.78 \\
    GPT-o4-mini     & \textbf{0.00}/\textbf{0.00} & \textbf{0.00}/\textbf{0.00} & \textbf{0.00}/3.15 & \textbf{1.96} \\
    \midrule
    DeepSeek-V2-16B & 80.21/80.75 & 75.79/67.37 & 82.98/75.07 & 84.31 \\
    Llama-3-8B      & 59.47/30.08 & 61.74/43.97 & 76.19/76.88 & 68.63 \\
    R1-Dist-Llama-8B& 29.55/\textbf{22.11} & \textbf{17.37/10.58} & 79.79/72.78 & 33.33 \\
    Qwen2-7B        & 64.91/61.74 & 58.38/60.75 & 73.60/83.70 & 71.57 \\
    R1-Dist-Qwen-7B & \textbf{27.37}/30.77 & 23.16/14.78 & \textbf{55.14/57.30} & 57.84 \\
    Qwen3-30B-A3B   & 67.89/61.58 & 59.47/68.95 & 74.21/70.53 & 13.73 \\
    Qwen3-235B-A22B & 62.60/72.68 & 61.05/59.47 & 70.37/80.74 & \textbf{8.82} \\
    \bottomrule
  \end{tabular}
  \end{adjustbox}
  \caption{Inconsistency (\%) on \emph{synthetic} datasets (lower is better).}
  \label{tab:synthetic}
\end{table}

As Table~\ref{tab:synthetic} reports, explicit $(x,y)$ coordinates are easier for
GPT-4o and DeepSeek-R1; ordered-relation prompts substantially raise
error rates, confirming the difficulty of long-chain composition. Interestingly, GPT-4o's performance on the ordered relations task is comparable to that of smaller, open-source models. In contrast, GPT-o1 and DeepSeek-R1 perform better in such tasks involving long-chain reasoning. 


For kinship, even the longest reasoning path has length 5, all models exhibit non‐trivial inconsistency rates. Most errors stem from in‐law relationships and gender misidentification (e.g., DeepSeek‐R1 infers that E is both the great‐grandfather of Z and the grandmother of X).

\subsection{Two Methods for Self-Consistency Fixing \& Fixing Results}

As Section~\ref{sec:metric} states, calculating ``the proportion of edges that must be removed to eliminate all cycles'' is NP-complete, therefore the inconsistency score we give is actually an upper bound. Does it faithfully capture the actual inconsistency? Intuitively, we can think of the problem in another way: we can initialize a self-consistent system, and make it approximate the relations given by all models' responses. This innovate us of using energy-based models (EBM) optimization, illustrated in Figure~\ref{fig:method-overview}. We provide the fixing results of the graph method first, then the detailed procedure and fixing results of EBM. 

\begin{figure}[ht]
    \centering
    \includegraphics[width=0.6\linewidth]{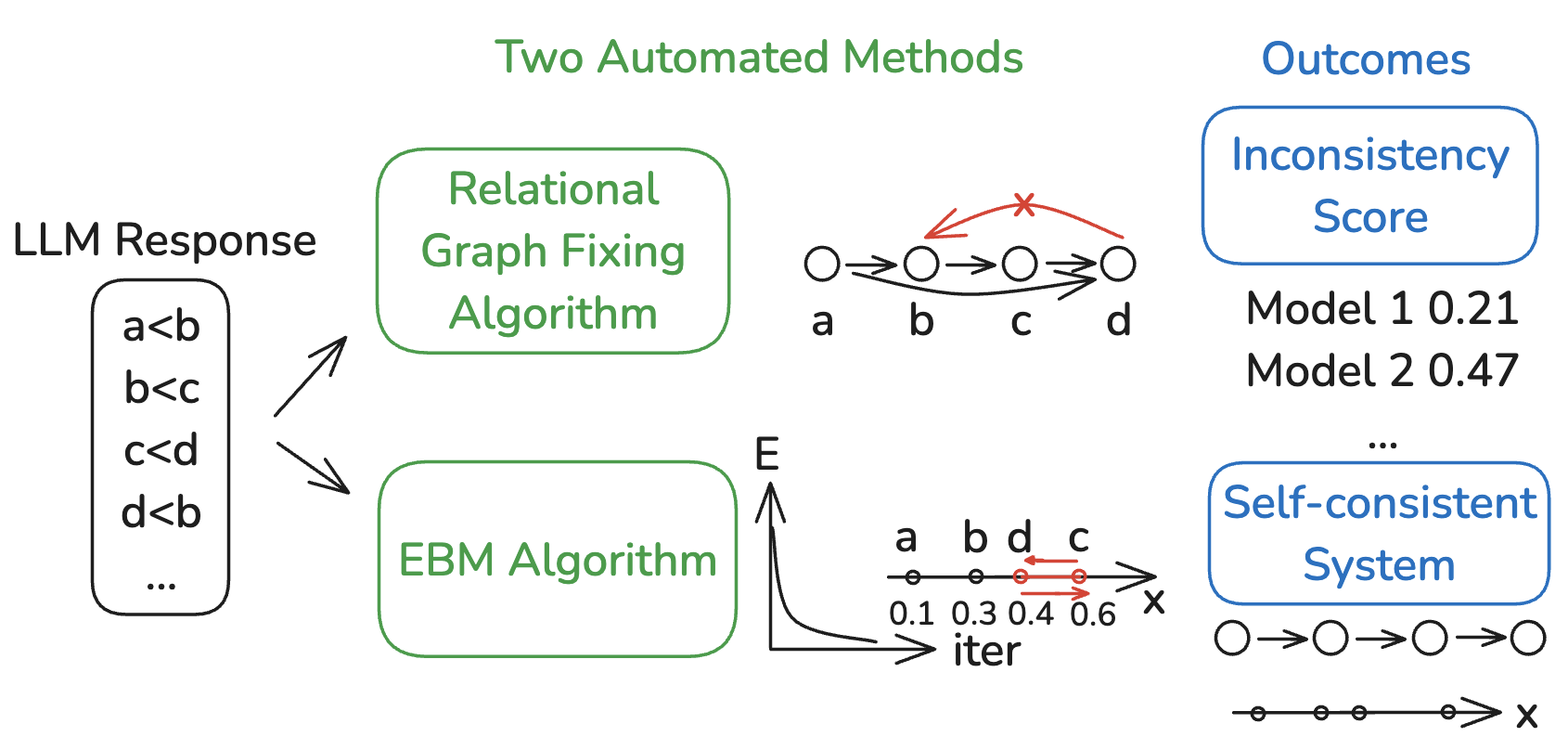}
    \caption{Two methods for self-consistency fixing: Graph Fixing Algorithm and EBM Algorithm. They approximate the self-consistent goal from opposite directions. The graph-based method starts from the inconsistent model’s raw relational predictions, dropping edges that cause the inconsistency, and finally reaches complete self-consistency. EBM takes an initial consensus assignment, preserving self-consistency all the way, and iteratively approximating to representing the model’s outputs by minimizing a well-defined energy (loss) function.}
    \label{fig:method-overview}
\end{figure}




\subsubsection{Graph Fixing Algorithm}
For effectively finding out reverse edges, we construct a directed graph per axis, and identify reverse edges via Tarjan SCCs and topological sort, then flip the minimal set to obtain a simply-ordered graph (detailed algorithm and proof in Appendix \ref{app:graph}).

\subsubsection{Graph Fixng Results}

After applying the above graph fixing method, we can get the total order of all places in both directions. We use the total order to reconstruct the self-consistent map and compare it with the reference map (reconstructed by the ground-truth total order of longitude and latitude). 

\begin{figure*}[ht]
    \centering
    \begin{subfigure}[b]{0.3\textwidth}
        \includegraphics[width=\textwidth]{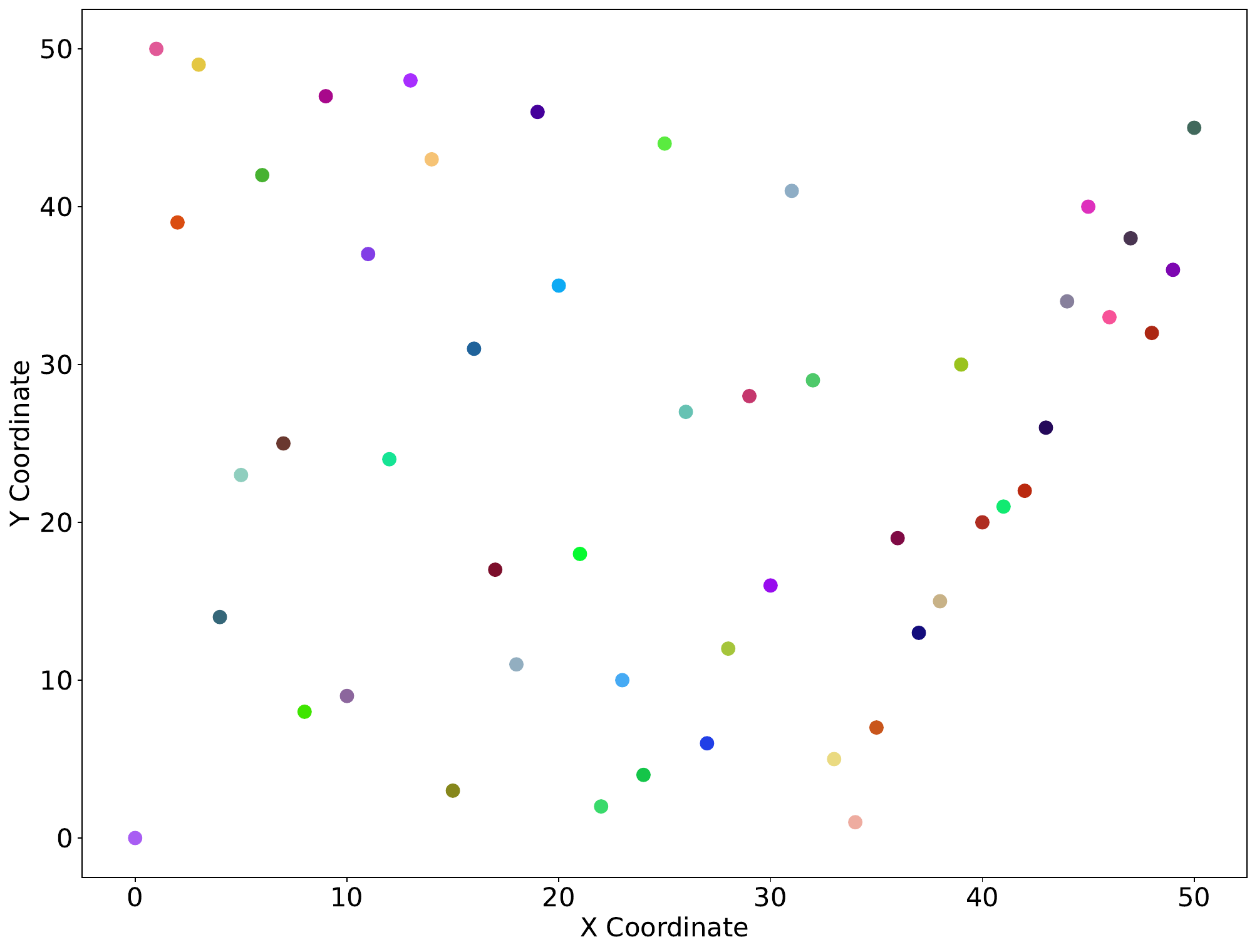}
        \caption{GPT-4o}
        \label{fig:sub2}
    \end{subfigure}
    \begin{subfigure}[b]{0.3\textwidth}
        \includegraphics[width=\textwidth]{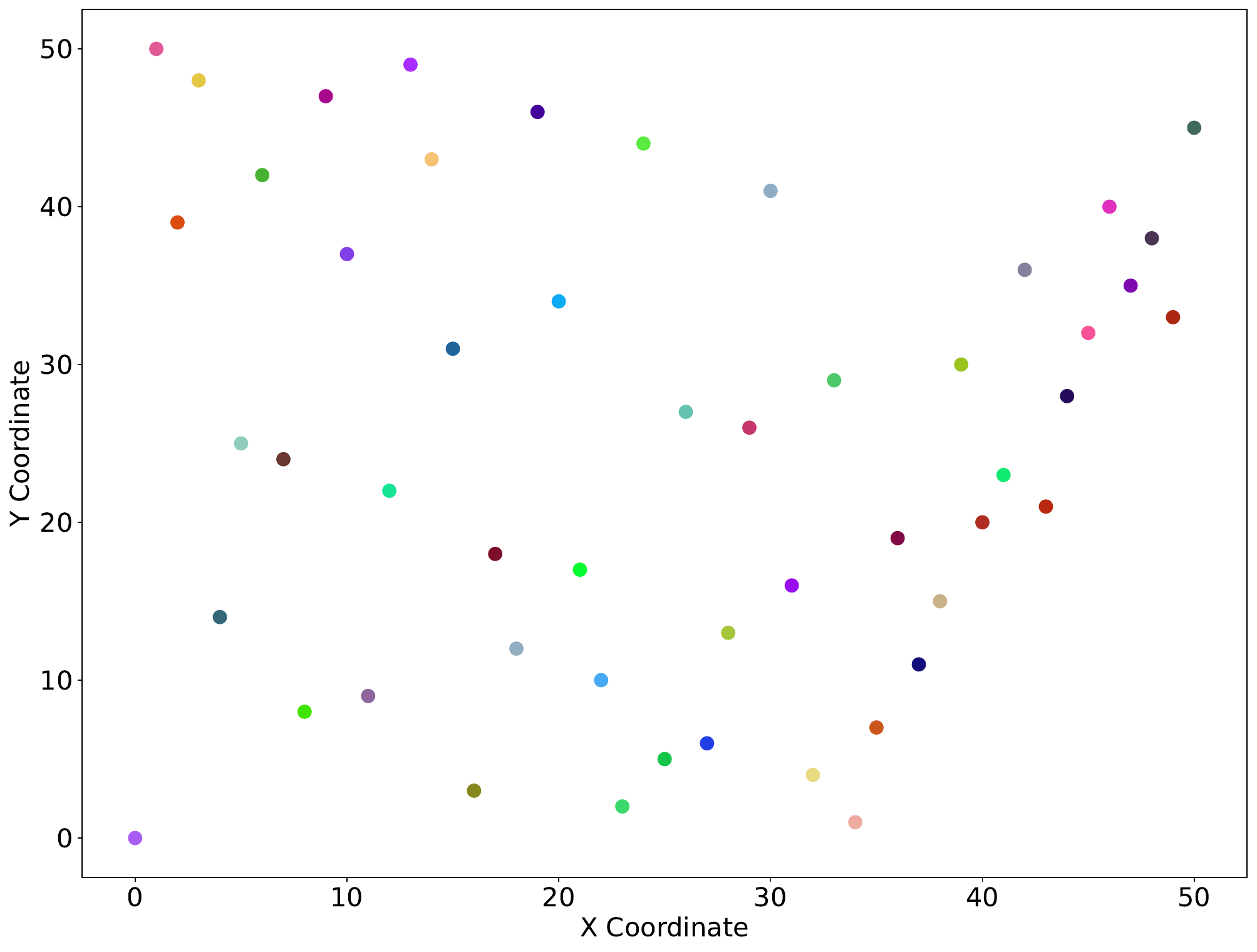}
        \caption{Ground truth}
        \label{fig:sub3}
    \end{subfigure}
    \begin{subfigure}[b]{0.3\textwidth}
        \includegraphics[width=\textwidth]{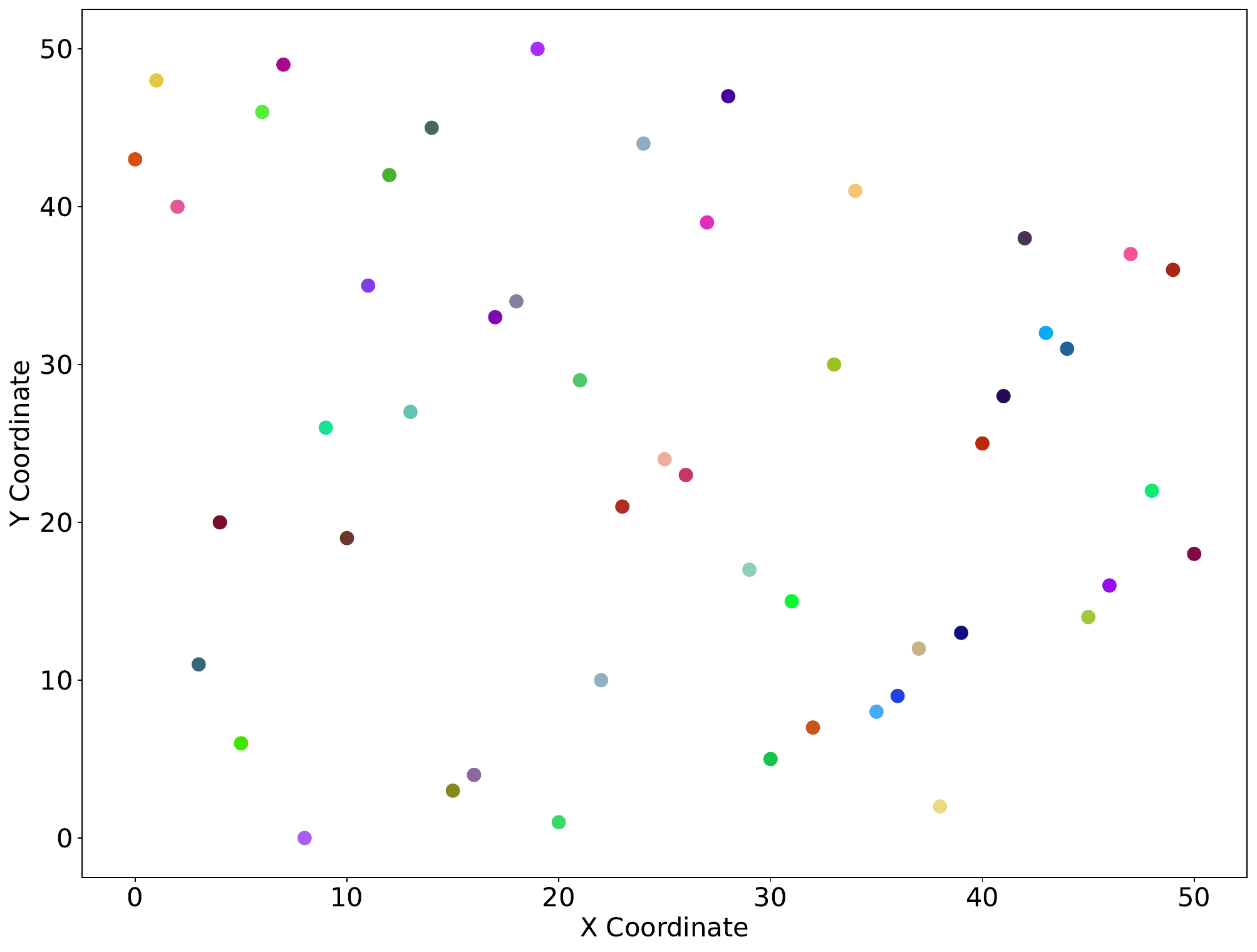}
        \caption{Qwen2-7B}
        \label{fig:sub4}
    \end{subfigure}
    \caption{The reconstructed graph of the US map of GPT-4o, the ground truth, and the Qwen2-7B-Instruct. From the point distribution, we can see that the GPT-4o is close to the ground truth, while Qwen2-7B-Instruct is far from the ground truth.}
    \label{fig:consistency blueprint}
\end{figure*}
We visualize some results for the states of the US in Figure~\ref{fig:consistency blueprint}. From the reconstructed map, we've found that the models with lower inconsistency score (e.g. GPT-4o) show overall distributions resembling the real map, although local positions still exhibit misalignments. In contrast, the models with higher inconsistency score (e.g. Qwen2-7B) demonstrate larger gaps in their overall distributions compared to the real map, with positional relationships approaching a random distribution. More visualization results for remaining models can be found in the Appendix\ref{app:2d-fix-result}.

\subsection{EBM Method}
\paragraph{Energy-Based Formulation}
The EBM method is suitable for any vector space. For simplicity, we talk about 1D case, the higher-dimensional case is its overlay in each dimension.
The methodology centers on an EBM framework where we:
\begin{enumerate}
    \item Represent each object as a scalar coordinate~$x_i$
    \item Define an energy function quantifying relation violations
    \item Optimize coordinates through gradient descent minimization
\end{enumerate}

\paragraph{Energy Function Specification}
For each directed relation $r_{ij}$ stating that "$x_i$ should precede $x_j$", we define the energy of the component $E(r_{ij})$ as: $E(r_{ij}) = \max(0,1 + (x_i - x_j))$\\
The total system energy aggregates individual relation energies:
\begin{equation}
    E_{\text{total}} = \sum_{r_{ij} \in \mathcal{R}} E(r_{ij})
\end{equation}
This formulation penalizes both incorrect orderings and insufficient separation in correct relations.

\paragraph{Optimization Protocol}
(1) Initialization: Sequentially initialize objects in [0,1]. First, assign random position to new object $x_k$. Then, For some established relations between $x_k$ and existing coordinates: If true relation direction with $x_j$ conflicts with current positions: change $x_k$ to $1.5x_j-0.5x_k$. (2) Gradient Descent: Update positions via:$x_i^{(t+1)} = x_i^{(t)} - \eta \frac{\partial E_{\text{total}}}{\partial x_i}$, 
    where $\eta$ is the learning rate. This gradually reduces energy by moving coordinates along the negative energy gradient.

\paragraph{EBM Fixing Results}

This formulation enables efficient identification and suppression of inconsistent relations while maintaining computational tractability through coordinate-based parameterization. As Figure~\ref{fig:energy-iterations} shows, the energy objective has fast and stable convergence. The method proves particularly effective under the sparsity-of-errors assumption, when most LLM-derived relations are fundamentally correct, as in Figure~\ref{fig:score-in-reverse}.

\begin{figure}[ht]
    \centering
        \begin{subfigure}[b]{0.52\textwidth}
            \includegraphics[width=\textwidth]{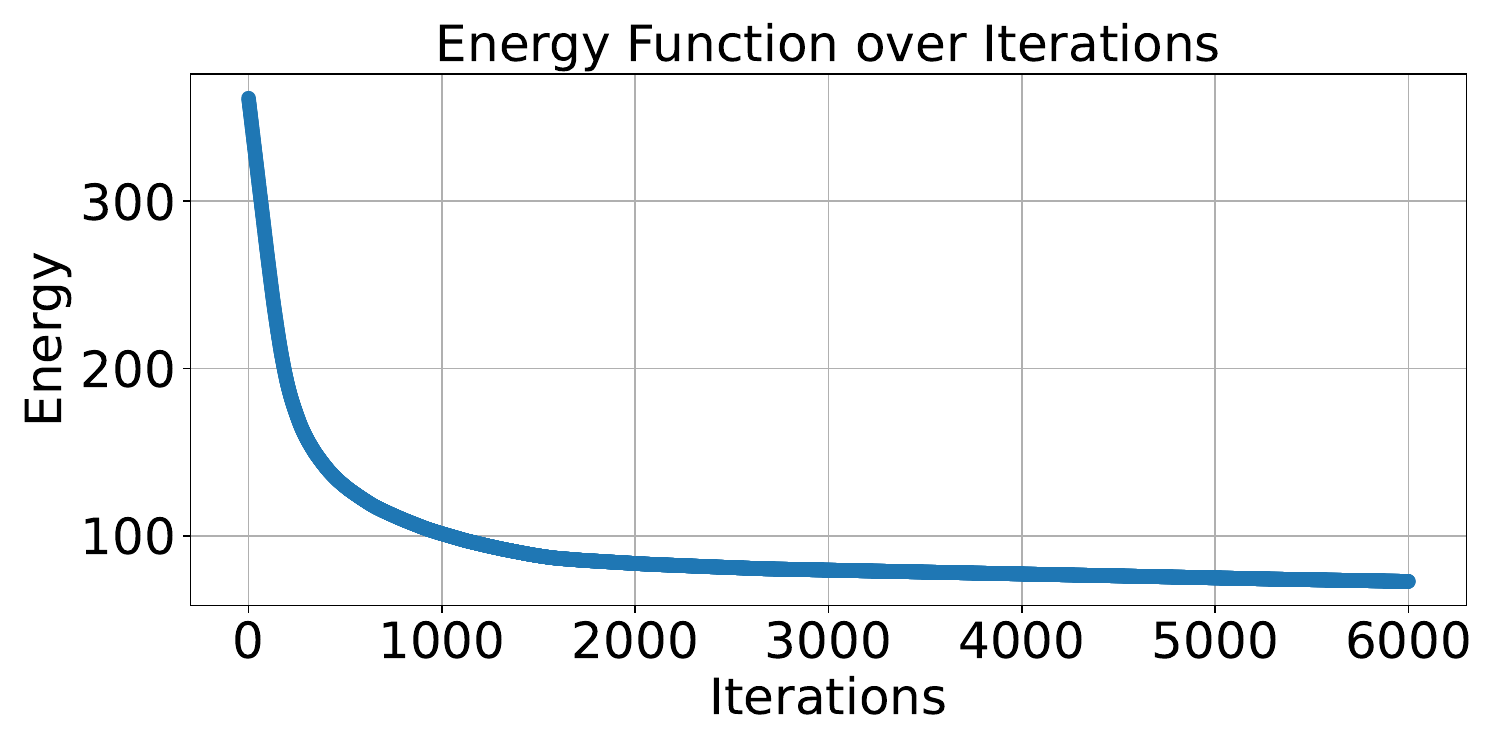} 
            \caption{The energy function over iterations of \\ DeepSeek-R1 on Ancient dataset}
            \label{fig:energy-iterations}
    \end{subfigure}
    \begin{subfigure}[b]{0.42\textwidth}
        \includegraphics[width=\textwidth]{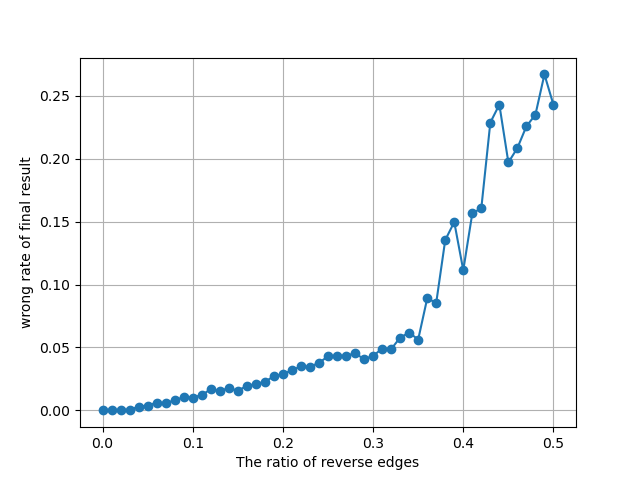}
        \caption{The error rate of EBM final result vs. the ratio of reverse edges}
        \label{fig:score-in-reverse}
\end{subfigure}
\caption{(a) EBM can effectively approximate the model's outputs within a few iterations. (b) We randomly reverse some edges from the true relation of dataset US\_state and apply EBM algorithm on it. We can almost find the true relation when the reverse edges are few. It's almost linear when ratio$<0.3$.}
\end{figure}

\subsection{Validation: the inconsistency scores have a positive correlation for the graph and EBM method}

After we apply the EBM Algorithm on the models' output, the final relations given by EBM is self-consistent. We can use it as a reference to calculate the inconsistency score of the original model's responses. The results are shown in Table~\ref{tab:EBM_fixed}.

\begin{table}[ht]
	\centering
	\small
	\begin{adjustbox}{max width=\linewidth}
		\begin{tabular}{lcccccccc}
			\toprule
			\multirow{2}{*}{\textbf{Model}} &
			\multicolumn{3}{c}{\textbf{Temporal 1-D} $\downarrow$} &
			\multicolumn{2}{c}{\textbf{US Geo 2-D (EBM)} $\downarrow$} &
			\multicolumn{3}{c}{\textbf{2-D Fictional Map (EBM)} $\downarrow$} \\[-1pt]
			\cmidrule(lr){2-4}\cmidrule(lr){5-6}\cmidrule(lr){7-9}
			& Art & Ancient & Recent
			& City (x/y) & State (x/y)
			& XY Pos.\,(x/y) & Center-Rel.\,(x/y) & Ordered-Rel.\,(x/y) \\
			\midrule
GPT-4o&5.79&14.21&4.74&\textbf{0./0.}&2.59/3.53&\textbf{0./0.}&2.63/3.68&75.79/74.21\\
o1-mini&18.95&38.42&20.53&2.63/2.63&2.27/2.67&0.53/0.53&0.53/0.&\textbf{5.26/6.32}\\
DeepSeek-V3&8.42&17.89&4.74&0./1.05&1.88/2.27&10.08/11.76&0.90/0.&54.79/49.32\\
DeepSeek-R1&\textbf{3.68}&\textbf{10.53}&\textbf{1.05}&0./1.05&\textbf{0.55/1.02}&\textbf{0./0.}&\textbf{0./0.}&28.95/29.47\\
\midrule
DeepSeek-V2-16B&75.26&102.11&112.11&102.63/79.47&99.06/81.88&83.16/79.47&74.74/76.84&85.79/73.16\\
Llama-3-8B&45.26&26.32&33.68&85.26/26.32&89.57/25.41&57.37/28.42&64.21/45.26&82.11/84.74\\
R1-Dist-Llama-8B&60.00&54.21&32.63&36.84/24.21&38.90/31.69&26.84/21.58&15.26/8.95&81.05/75.26\\	
Qwen2-7B-Chat&23.68&57.89&48.42&81.58/17.37&73.41/15.84&62.11/60.00&64.74/67.89&80.53/85.79\\
R1-Dist-Qwen-7B&66.32&67.37&57.37&63.16/42.63&58.51/35.92&27.89/30.00&21.58/15.26&58.42/62.63\\
\bottomrule
		\end{tabular}
	\end{adjustbox}
	\caption{Inconsistency scores computed with the EBM algorithm.  Boldface indicates the lowest (best) value in each column.}
	\label{tab:EBM_fixed}
\end{table}

The inconsistency rates produced by the Graph and EBM methods are extremely consistent across models, with every comparable column yielding a Pearson correlation of $r\geq 0.93$ and an overall correlation of $r=0.982$.
This confirms the validity of the two models' evaluation and fixing results.

\section{Related Work}

\paragraph{Self-Consistency of LLMs}

Previous work on self-consistency mainly focuses on the failure of multi-step reasoning for math problems. \citet{huang24self-correct} demonstrates that LLMs fail to self-correct intrinsic reasoning errors without external feedback, while \citet{ChenPPPZBC24two-failure-self-consistency} further classifies self-consistency issues in math reasoning into two types, hypothetical and compositional consistency. \citet{DBLP:chen23universal-self-consistency} findsthat  LLMs have universal consistency to do sampling and aggregation for correct answers. Their work provides valuable phenomenal insights and fixes inconsistencies by aggregation. Our proposed approach differs from the above works in that we provide a theoretical framework and quantitative evaluation metrics to further investigate self-consistency issues quantitatively, and propose automated algorithms for fixing self-consistency.

\citet{inconsistency-abound} shows that LLMs exhibit inconsistencies when reasoning concepts within a knowledge graph, while our work studies inconsistencies in reasoning in low-dimensional space. \citet{zhang2023trade} introduces the Impossible Trinity Theorem, demonstrating that for removal-based interpretability algorithms, consistency and efficiency cannot be simultaneously achieved, 
which highlights fundamental limitations in removal-based algorithms.

\paragraph{LLM Interpretability}
\citet{tan2023contrastive} analyzes the internal representations of SimCLR and CLIP and prove that these models maintain a consistent internal space, where the similarity between objects can be directly computed based on their relative positions in this learned space. Their findings suggest that some models inherently develop structured representations that preserve consistency. 
\citet{yuan2023categorical} propose a categorical perspective on interpretability, framing explanations as the application of external knowledge, such as limit decompositions of input and output, to ensure interpretability is both consistent and verifiable.

\section{Conclusion}
\label{sec:conclusion}
The challenges of self-consistency and interpretability in large models are complex, yet addressing them is critical for the responsible deployment of AI systems, particularly in high-stakes fields. 

We propose a new evaluation metric for LLM self-consistency, and test competitive LLMs of different sizes. Our preliminary findings suggest that while simple 1D temporal, 2D spatial, and multi-hop kinship relationships may be trivial for human reasoning, even state-of-the-art models including GPT-o4-mini and DeepSeek-R1 struggle to handle these situations.

Our work also provides two algorithms for finding the relations that are contradict to the LLM's majority of answers automatically. How to make LLMs trained to be inherently self-consistent? We call for more attention to future research in the field of self-consistency.

\bibliographystyle{plainnat}
\bibliography{reference}

\appendix
\newpage

\section{Appendix}

\subsection{Category Theory View}
\label{app:ct}

Traditional frameworks typically treat LLMs as individual, isolated input-output mappings: evaluation is solely based on ($input_i$, $output_i$, $golden_i$) triplets, neglecting relational structures across outputs. In contrast, our category-theoretic framework explicitly formalizes relational consistency across multiple outputs ($output_i$, $output_j$). It allows us to systematically answer crucial questions: Are these outputs internally coherent, or do they contradict each other?
Here we outline some necessary definitions used in our paper.

\begin{definition}
\label{def1}
(Category, Object, Morphism). A category $\mathcal{C}$ has a set of objects $\mathrm{Ob}(\mathcal{C})$ and a set of morphisms $\mathrm{Hom}_\mathcal{C}(a,b)$, from $a$ to $b$ for every $a,b\in \mathrm{Ob}(\mathcal{C})$. 
Given $f \in \mathrm{Hom}_{\mathcal{C}}(a, b), g \in \mathrm{Hom}_{\mathcal{C}}(b, c)$, there is a unique composition $g \circ f \in \mathrm{Hom}_{\mathcal{C}}(a, c)$. Two axioms govern the composition of morphisms:

- \emph{Identity}: For every object $x$, there exists a identity morphism $\mathrm{id}_x: x \rightarrow x$ for $x$, such that for every morphism $f: a \rightarrow b$, we have $\mathrm{id}_b \circ f=f=f \circ \mathrm{id}_a .$
   
- \emph{Associativity}: If $f: a \rightarrow b, g: b \rightarrow c$, and $h: c \rightarrow d$ then $h \circ(g \circ f)=(h \circ g) \circ f .$

\end{definition}

\begin{definition}
 (Functor). 
A functor $F$ from a category $\mathcal{C}$ to a category $D$, written as $F: \mathcal{C} \rightarrow D$, maps an object $x \in \mathrm{Ob}(\mathcal{C})$ to $F(x)\in \mathrm{Ob}(D)$; as well as a morphism $f: x \rightarrow y$ in $\mathrm{Hom}(\mathcal{C})$ to $F(f): F(x) \rightarrow F(y)$ in $D$, such that the following two properties hold:
  
- \emph{Preservation of Identity}: For every object $x$ in $\mathcal{C}, F(\mathrm{id}_x)=\mathrm{id}_{F(x)}$;

- \emph{Preservation of Composition}: For all morphisms $f: x \rightarrow y$ and $g: y \rightarrow z, F(g \circ f)=F(g) \circ F(f)$.
\end{definition}

\begin{application}
Applying the above notion, we can have a unified problem formulation across different relational reasoning tasks.

Let us consider a set of objects $A, B, C \in \mathrm{Ob}(\mathcal{C})$ representing entities in the world (e.g., events, places or people), and morphisms $f: A \rightarrow B$ and $g: B \rightarrow C$ representing relational knowledge between them (e.g., "A is before B", "B is before C ").

A model is said to be ``relationally self-consistent'' if, for all such morphisms learned or generated by the model, there exists a morphism $h: A \rightarrow C$ such that $h = g \circ f$ and the interpretation of $h$ is semantically consistent with the composition of $f$ and $g$. In other words, if a model affirms $f$ and $g$, it should not contradict the composition $g \circ f$.
\end{application}

However, LLMs may produce responses that violate this composition property, i.e., they might affirm $f$ and $g$ but reject $g \circ f$. To address this, we introduce the notion of a \textbf{self-consistent proxy category} $\mathcal{C}_{proxy}$: an approximately minimal adjustment of the model's output that forms a valid category with associativity and identity preserved. 

\begin{definition}
\label{def:proxy}
(Self-consistent proxy category). Formally, for a given set of inferred morphisms $\mathcal{M}$ from an LLM, we define $\mathcal{C}_{proxy}$ as a category $(\mathrm{Ob}(\mathcal{C}), \mathrm{Hom}_{\mathcal{C}})$ such that:

- Each morphism in $\mathcal{M}$ is either preserved or replaced with a corrected morphism;

- The resulting morphisms obey associativity: $(h \circ g) \circ f = h \circ (g \circ f)$ for all applicable $f,g,h$;

- Identity morphisms $\mathrm{id}\_x$ are present for all $x \in \mathrm{Ob}(\mathcal{C})$.
\end{definition}

This proxy serves as a consistency-preserving abstraction of the model’s beliefs.


Suppose we have multiple self-consistent proxy categories $\mathcal{C}_{model}^{(i)}$, each representing the internal relational logic of a specific language model. Let $\mathcal{C}_{reality}$ denote an external, ground-truth relational structure (e.g., actual temporal or spatial orderings), we define the reality-alignment functor as follows.

\begin{definition}
(Reality-alignment functor). We define a reality-alignment functor $F: \mathcal{C}_{model}^{(i)} \rightarrow \mathcal{C}_{reality}$ that attempts to map objects and morphisms in the model’s category to their counterparts in the real world. This functor should satisfy:
\end{definition}

- \emph{Object Mapping}: $F(x)$ is the real-world interpretation of the model’s object $x$.

- \emph{Morphism Mapping}: For every morphism $f: x \rightarrow y$ in $\mathcal{C}_{model}^{(i)}$, $F(f): F(x) \rightarrow F(y)$ should reflect the ground-truth relationship.

We say a model is \textit{reality-aligned} if such a functor $F$ exists that also satisfies:

- Preservation of Identity: $F(\mathrm{id}_x) = \mathrm{id}_{F(x)}$;

- Preservation of Composition: $F(g \circ f) = F(g) \circ F(f)$ for all $f, g$.






\subsection{Graph Method For Inconsistency Evaluation and Fixing}
\label{app:graph}
Can we always transform the graph to a simply-ordered graph (defined in Definition~\ref{def:simply_ordered_graph}) by finding and deleting reverse edges? We prove Proposition~\ref{prop:algorithm_proof}, and then Theorem~\ref{thm:simply_ordered_graph}, which provides a theoretical guarantee for achieving self-consistency by removing reverse edges in the graph following Algorithm \ref{alg:node_ordering}. 

\begin{proposition}
	\label{prop:algorithm_proof}
	There exists an algorithm that, for any directed graph $G = (V,E)$, can assign an order $f: V \to \mathbb{R}$ to its nodes such that: If a cycle is formed during the ordering process, the cycle contains exactly one edge $(u,v)$ where $f(u) > f(v)$ (a reverse edge), and this reverse edge will be removed to eliminate the cycle
\end{proposition}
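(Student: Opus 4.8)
The plan is to exhibit an explicit incremental construction that inserts the vertices of $G$ one at a time, maintaining at every stage a real-valued labeling $f$ of the already-inserted vertices together with the invariant that the current graph restricted to those vertices is simply-ordered (all surviving edges go from smaller label to larger label). Concretely, I would sort the vertices by in-degree (as the paper's metric section suggests) to fix a processing order $v_1,\dots,v_N$, and then process them in this order; when $v_k$ is inserted I assign it a label, look at all edges between $v_k$ and $v_1,\dots,v_{k-1}$, and decide which edges (if any) are reverse edges under the current labeling. The claim to prove is that at most one reverse edge is created per newly closed cycle, and that deleting that single edge restores the simply-ordered invariant.

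The key steps, in order, would be: (1) Set up the incremental algorithm precisely — I would give $v_k$ a fresh label strictly larger than all existing labels (so initially $v_k$ is a ``sink'' in the partial order), then inspect incident edges. (2) Observe that any edge $v_k \to v_j$ with $j<k$ is now a reverse edge (since $f(v_k)$ is the largest label), while every edge $v_j \to v_k$ is forward; so the only way a cycle through $v_k$ closes is via such a reverse edge $v_k \to v_j$ together with a forward path from $v_j$ back to $v_k$. (3) Argue that we can choose the label of $v_k$ — or equivalently re-insert $v_k$ at the correct position in the existing total order of labels — so that \emph{at most one} incident edge becomes reverse; the natural choice is to place $v_k$ just before the minimum-labeled out-neighbor among $v_1,\dots,v_{k-1}$, or just after the maximum-labeled in-neighbor, whichever the in-degree heuristic dictates, so that the number of violated incident edges is minimized and, by a counting/pigeonhole argument on the partition of neighbors into ``should be before'' and ``should be after'' classes, any newly formed cycle picks up exactly one reverse edge. (4) Delete that reverse edge; since removing it breaks every cycle it participated in and introduces no new edges, the invariant ``simply-ordered so far'' is preserved, and by induction on $k$ the final graph (after all deletions) is simply-ordered with $f$ as the witnessing order. (5) Conclude by noting the total set of deleted edges is exactly the reverse-edge set $E_{\mathrm{rev}}$ referenced in the metric, so the bound $|E_{\mathrm{rev}}| \le \tfrac12 N(N-1)$ from the excerpt follows immediately.

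The main obstacle I anticipate is step (3): justifying that a single well-chosen insertion position guarantees \emph{exactly one} reverse edge per closed cycle, rather than merely ``few.'' The subtlety is that $v_k$ may have several out-neighbors and several in-neighbors already present, and if the existing partial order does not separate them cleanly (i.e., some out-neighbor has a smaller label than some in-neighbor), then no insertion position avoids creating a reverse edge — but we must show the cycles thereby created each contain only one. The right way to handle this is to track the cycle structure explicitly: when we insert $v_k$ and it creates reverse edge $v_k \to v_j$, any cycle through this edge consists of $v_k \to v_j$ followed by a forward (simply-ordered) path $v_j \rightsquigarrow v_k$ entirely within the previously-processed, already-simply-ordered subgraph; such a path contains no reverse edge by the inductive invariant, so the cycle has exactly one reverse edge, namely $v_k \to v_j$ itself. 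This is where the induction hypothesis does the real work, and I would make sure the invariant is stated strongly enough (the partial graph after each deletion is simply-ordered, not just acyclic) for this argument to go through cleanly. A secondary, more routine concern is bookkeeping: ensuring the labels can always be chosen in $\mathbb{R}$ without collisions (trivial, since $\mathbb{R}$ is dense) and that the in-degree-based processing order is compatible with the guarantee — I expect this to reduce to a direct check rather than a genuine difficulty.
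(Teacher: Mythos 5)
Your core argument coincides with the paper's: insert the nodes one at a time, maintain the inductive invariant that the already-processed subgraph is simply-ordered, give the new node $v_k$ the largest label so that only its out-edges to earlier nodes can be reverse, and observe that any cycle closed by such a reverse edge $v_k \to v_j$ consists of that edge plus a forward path $v_j \rightsquigarrow v_k$, which by the invariant contains no reverse edge --- hence exactly one reverse edge per cycle, and deleting it restores the invariant. The only structural difference is that the paper wraps this inside a Tarjan SCC decomposition (process SCCs in topological order; within each SCC always pick a next node that has an in-edge from the visited set). That scaffolding is not needed for the present proposition, but it is what makes the subsequent weak-connectivity claim (Proposition~\ref{prop:weakly_connected}) go through and lets the implementation choose roots so the output stays close to the model's majority answer; your global in-degree ordering is a legitimate simplification for the statement as given. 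One genuine warning about your step (3): the idea of inserting $v_k$ at an \emph{intermediate} position of the existing order (e.g.\ just before its minimum-labeled out-neighbor) so as to minimize the number of violated incident edges is not compatible with the ``exactly one reverse edge per cycle'' claim. If $v_k$ ends up strictly between an out-neighbor $v_{j'}$ and an in-neighbor $v_j$ for which the old graph contains a forward path $v_{j'} \rightsquigarrow v_j$, then the cycle $v_k \to v_{j'} \rightsquigarrow v_j \to v_k$ contains \emph{two} reverse edges ($v_k \to v_{j'}$ and $v_j \to v_k$). So you should drop the pigeonhole/optimal-placement digression and commit to the append-at-the-end placement you already use in steps (1)--(2) and in your closing paragraph; with that choice the induction is complete and matches the paper's proof.
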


\begin{proof}
	We prove this by construction and induction. Consider Algorithm~\ref{alg:node_ordering}:
	
\begin{algorithm}[ht]
	\caption{Node Ordering Algorithm}
	\label{alg:node_ordering}
	\begin{algorithmic}[1]
		\STATE \textbf{Input:} A directed graph $G = (V, E)$
		\STATE \textbf{Output:} An ordering function $f: V \to \mathbb{R}$
		\STATE Decompose $G$ into strongly connected components (SCCs) $S_1, S_2, \dots, S_n$ using Tarjan's algorithm, ordered topologically
		\STATE Let $G'$ be the DAG of SCCs (condensation of $G$)
		\STATE Initialize counter $t \gets 1$
		\FOR{each SCC $S_k$ in topological order}
		\STATE Choose arbitrary root node $s \in S_k$
		\STATE Initialize visited set $N \gets \{s\}$
		\STATE Set $f(s) \gets t$, $t \gets t + 1$
		\WHILE{$N \neq S_k$}
		\STATE Select $v \in S_k \setminus N$ with $(u,v) \in E$ for some $u \in N$
		\STATE Remove all edges $(v,w)$ where $w \in N$ (these would form reverse edges)
		\STATE Set $f(v) \gets t$, $t \gets t + 1$
		\STATE Update $N \gets N \cup \{v\}$
		\ENDWHILE
		\ENDFOR
		\STATE \textbf{Return} $f$
	\end{algorithmic}
\end{algorithm}
	To prove correctness, consider any cycle $C$ that exists after ordering. By construction: All nodes in $C$ must belong to the same SCC $S_k$. Within each SCC:
	\begin{enumerate}
		\item Nodes are ordered sequentially, maintaining that all edges between ordered nodes go from lower to higher order
		\item When adding node $v$, any edge from $v$ to previously ordered nodes is removed
	\end{enumerate}
	Thus, any cycle must use exactly one reverse edge (from $v$ to an earlier node).
\end{proof}

\begin{proposition}
\label{prop:weakly_connected}
Algorithm~\ref{alg:node_ordering} preserves weak connectivity. That is, if $G$ is a weakly connected graph, the resulting graph after edge reduction remains weakly connected.
\end{proposition}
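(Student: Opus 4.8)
\textbf{Proof proposal for Proposition~\ref{prop:weakly_connected}.}

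The plan is to trace through Algorithm~\ref{alg:node_ordering} and show that the only edges ever deleted are ``back edges'' $(v,w)$ with $w \in N$, and that each such deletion never destroys weak connectivity because the two endpoints remain joined by other edges that are retained. Recall that weak connectivity of $G$ means the underlying undirected graph is connected; so it suffices to show that for every deleted edge $(v,w)$, there is still an undirected path between $v$ and $w$ using only edges that survive the whole run of the algorithm.

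First I would fix attention on the moment the algorithm deletes an edge $(v,w)$ in line~12: this happens while processing some SCC $S_k$, at the step where $v$ is being added to the visited set $N$, and $w \in N$ already. I would then argue two things. (1) The vertex $v$ was selected in line~11 because there is an edge $(u,v) \in E$ with $u \in N$; this edge $(u,v)$ is \emph{not} deleted at this step (we only delete edges \emph{out of} $v$ into $N$, not the edge that brought us to $v$), and in fact it is never deleted later either, since once $v \in N$ no future step inside $S_k$ removes an edge landing on $v$, and steps in later SCCs only touch edges inside those SCCs. So $v$ stays connected (undirectedly) to $u \in N$. (2) By an induction on the order in which nodes enter $N$, every node currently in $N$ is joined to the root $s$ of $S_k$ by a path of retained edges (the base case is $s$ itself; the inductive step is exactly the edge $(u,v)$ just discussed). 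Hence right after the deletion, both $v$ and $w$ lie in $N$ and are each connected by retained edges to $s$, so they are connected to each other. Because retained edges are never subsequently removed, this undirected path persists to the end of the algorithm.

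That handles weak connectivity \emph{within} each SCC $S_k$: after the algorithm finishes the while-loop for $S_k$, the retained edges already make $S_k$ a weakly connected subgraph. To finish, I would stitch the SCCs together. Since $G$ is weakly connected, its condensation $G'$ (the DAG of SCCs) is weakly connected, so for any two SCCs there is a chain of SCCs consecutively joined by at least one original edge of $G$; such inter-SCC edges are never candidates for deletion (line~12 only removes edges with both endpoints in the current SCC), so they all survive. Combining the intra-SCC weak connectivity from the previous paragraph with these surviving inter-SCC edges yields an undirected path between any two vertices of $G$ in the reduced graph, which is the claim.

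The main obstacle is the bookkeeping in step~(2): one must be careful that the edge $(u,v)$ witnessing $v$'s attachment to $N$ is genuinely immune to all later deletions, not just the one at the current step. This requires noting that deletions are always of the form ``edge from the just-added node into the already-visited set,'' so no edge is ever deleted twice nor deleted ``retroactively'' once both its endpoints are settled; formalizing ``retained edges are never removed later'' is the one place where the loop invariant must be stated precisely. Everything else is a routine induction.
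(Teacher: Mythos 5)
Your proposal is correct and follows essentially the same route as the paper's proof: inter-SCC edges are untouched so the (weakly connected) condensation survives, and within each SCC the edge $(u,v)$ that admits each new node into $N$ is retained, giving weak connectivity by induction. The paper states this in three sentences; your version merely makes explicit the loop invariant that retained edges are never deleted at later steps.
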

\begin{proof}
The DAG of SCCs is weakly connected, and the algorithm leaves it entirely unchanged. Within each SCC, every new node added must be connected to some previously processed node, maintaining weak connectivity. Therefore, any pair of nodes in the graph remains weakly connected.
\end{proof}

\begin{theorem}
	\label{thm:simply_ordered_graph}
	If we modify Algorithm~\ref{alg:node_ordering} reverse the edges instead of removing reverse edges, we can get a simply-ordered graph.
\end{theorem}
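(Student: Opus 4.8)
The plan is to show that the modified algorithm — which reverses each problematic edge $(v,w)$ with $w\in N$ rather than deleting it — produces a graph whose edge set is exactly compatible with the linear order $f$ constructed by the algorithm, so that by Definition~\ref{def:simply_ordered_graph} the result is simply-ordered. First I would observe that $f$ is a bijection $V\to\{1,\dots,|V|\}$, hence induces a total order on $V$; this is immediate because the counter $t$ is incremented exactly once per node and every node is assigned a value (the outer loop ranges over all SCCs in the condensation, and the inner \textbf{while} loop terminates only when $N=S_k$, so every node of every SCC receives a label).

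Next I would verify that every edge of the \emph{output} graph goes from a lower $f$-value to a higher one. Take any edge $(u,w)$ present at termination. There are two cases. If $(u,w)$ is an original edge that was never touched by line~12, then at the moment the second of $u,w$ to be processed was added to $N$, the edge did not get reversed; by the case analysis in the proof of Proposition~\ref{prop:algorithm_proof}, untouched edges within an SCC always point from the earlier-labelled to the later-labelled node, and edges across distinct SCCs point in the direction of the topological order of the condensation, hence from smaller to larger $f$ (since SCCs are processed in topological order and $t$ only increases). If instead $(u,w)$ is the reversal of some original edge $(w,u)$ that was flipped at line~12, then at that moment $u\in N$ and $w$ was the node just being added, so $f(u)<f(w)$ — wait, I need to be careful about orientation here: line~12 handles edges $(v,w)$ with $w\in N$ and $v$ the new node, so $f(w)<f(v)$, and reversing gives the edge $(w,v)$ with $f(w)<f(v)$, i.e.\ again from smaller to larger $f$. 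In all cases the edge respects the order $f$.

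Finally I would note that no other edges are created, so the output edge set is a subset of $\{(x,y):f(x)<f(y)\}$; combined with the fact that the order $f$ exists, this is precisely the definition of a simply-ordered graph, completing the proof. One should also remark (optionally, citing Proposition~\ref{prop:weakly_connected}) that this transformation changes only the \emph{direction} of reverse edges, never their presence, so weak connectivity and indeed the underlying undirected graph are preserved.

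\textbf{Main obstacle.} The delicate point is the bookkeeping of edge \emph{orientation} in the second case: one must track that line~12's ``reverse edges'' are exactly the back-edges $(v,w)$ with $v$ new and $w$ already in $N$, so that flipping them yields forward edges, and simultaneously confirm that the edges $(u,v)$ used in line~11 to \emph{reach} $v$ are automatically forward and never need flipping. A secondary subtlety is handling cross-SCC edges: these are never processed by the inner loop at all, so one must argue separately (from the topological ordering of the condensation and the monotonicity of $t$) that they already point forward; this is where Proposition~\ref{prop:weakly_connected}'s observation that the condensation DAG is left unchanged does the real work.
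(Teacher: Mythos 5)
Your proposal is correct and follows essentially the same argument as the paper's (much terser) proof: every untouched edge already points from a smaller to a larger $f$-value by construction, and every reversed edge $(v,w)\mapsto(w,v)$ with $w\in N$ becomes forward since $f(w)<f(v)$, so the output edge set respects the total order $f$ and Definition~\ref{def:simply_ordered_graph} applies. Your additional bookkeeping on cross-SCC edges and on the orientation of line~12's back-edges is a welcome elaboration of details the paper leaves implicit, but it is not a different proof strategy.
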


\begin{proof}
	After processing:
	\begin{itemize}
		\item All original edges between nodes satisfy $f(u) < f(v)$ (by construction)
		\item All reversed edges $(v,u)$ become $(u,v)$ with $f(u) < f(v)$
	\end{itemize}
	Thus all edges respect the order, satisfying Definition~\ref{def:simply_ordered_graph}.
\end{proof}

For implementation, since the root node in Step 7 can be chosen arbitrary, but we want to make the outcome more close to the model's outputs, we rank the nodes according to their degree (as a reference order) and remove edges contradict to this order. This gives the final graph fixing algorithm used in our experiments to give the inconsistency score and do the fixing.

\subsection{Implementation Details}
\label{app:implementation}

\subsubsection{Dataset Summary}
\label{app:data}
The dataset statistics are shown in Table~\ref{tab:dataset-summary}.

\begin{table}[h!]
\centering
\begin{tabular}{lll}
\toprule
Dataset & Size & Example Objects \\ 
\midrule
Art & 20 & ``Life of Pi (Yann Martel)'', ``King Arthur (Antoine Fuqua)''\\
Ancient Figure & 20 & ``Hugh Primas (Latin lyric poet)'', ``Wace (Norman writer from Jersey)''\\
Recent Figure & 20 & ``Michael Jackson'', ``Steve Jobs''\\
   \midrule
US\_City & 20 & ``New York (City)'', ``Los Angeles (City)'', ``Seattle (City)''\\
US\_State & 51 & ``California'', ``Washington State'' \\
Artificial Place & 20 & ``Object\_0'', ..., ``Object\_19'' \\
\midrule
Kinship & 11 & ``A'', ``B'', ``X'', ``Y'', ``Z'' ...\\
\bottomrule
\end{tabular}
\caption{\label{tab:dataset-summary}
Dataset Summary. The first four datasets are adopted from \citep{llm_space_time} with MIT License. We add a US\_State dataset for a larger number of objects to show the model's performance on harder cases, datasets for artificial place and kinship, where all LLMs won't differ in prior knowledge.}
\end{table}


\paragraph{Binary relation Types}
Each dataset in Table~\ref{tab:dataset-summary} is associated with a specific set of binary relations, as listed below:
\begin{itemize}
    \item \textbf{Art, Ancient Figure, Recent Figure (Temporal reasoning):} \texttt{before}, \texttt{after}
    \item \textbf{US\_City, US\_State, Artificial Place (Geospatial reasoning):} \texttt{northeast}, \texttt{southeast}, \texttt{southwest}, \texttt{northwest}
    \item \textbf{Kinship (Family reasoning):} \texttt{husband}, \texttt{wife}, \texttt{father}, \texttt{mother}, \texttt{son}, \texttt{daughter}, \texttt{brother}, \texttt{sister}, \texttt{uncle}, \texttt{aunt}, \texttt{cousin}, \texttt{niece}, \texttt{nephew}, \texttt{grandpa}, \texttt{grandma}, \texttt{grandson}, \texttt{granddaughter}, \texttt{brother-in-law}, \texttt{sister-in-law}, \texttt{father-in-law}, \texttt{mother-in-law}, \texttt{son-in-law}, \texttt{daughter-in-law}, \texttt{aunt-in-law}, \texttt{uncle-in-law}, \texttt{niece-in-law}, \texttt{nephew-in-law}, \texttt{great-grandpa}, \texttt{great-grandma}, \texttt{great-grandson}, \texttt{great-granddaughter}
\end{itemize}
These relation sets reflect the type of structured reasoning each dataset is intended to evaluate—temporal, spatial, or familial. Those relations are given models in the prompts, and ask the model to select one as final answer.

\textbf{Temporal Datasets (1D)} Our first three temporal datasets are a subset of the classical dataset proposed by \cite{llm_space_time}. It consists of (1) Art: the titles (and creators) of artworks released between 2000 to 2009, with Wikipedia page views $>1,000,000$; (2)Figure: the names (and occupations) of historical figures who were born between $1000$ AD and $2000$ AD adapted from \citep{AnnamoradnejadA22}. In the task, we ask the models to compare the release date and birth date.

\textbf{Spacial Datasets (2D)} We adopt two real-world datasets of place within the United States, for city and state names.
The models are asked to compare the geographical centroid longitude and latitude of those places to calculate the east-west and north-south relationships. 
We also construct an artificial dataset, sampling 20 points from a 2D plane with twenty coordinates $x_i, y_i$ are essentially random permutations of integers from -10 to 9. 

\textbf{Kinship Dataset} The task involves inferring family tree relationships over four generations using 10 given relationships for 11 people (e.g. A is the father of B, D is the wife of C) that can determine all the gender and relationships in the family tree. The tree has 4 generations, with the highest generation of the family tree has only one couple, the second generation has two couples, and the next generation has many brothers and sisters, one of whom has a daughter of the fourth generation. This is our synthetic dataset, and more cases can be automatically generated through our program.

\subsubsection{Prompts}

Our framework utilizes two primary types of prompts: history comparison prompts and geographical relationship prompts. Each prompt is carefully structured to elicit precise reasoning and standardized outputs from the language model.

\paragraph{1D History Prompts}

We implement three variants of history comparison tasks:

\begin{itemize}
    \item \textbf{Art Work Release Date Comparison}: Determines which artwork was released earlier
    \item \textbf{Historical Figures Birth Date Comparison}: Compares birth dates of historical figures
    \item \textbf{Fictional Event Occurrence Comparison}: Analyzes the chronological order of fictional events
\end{itemize}

The general structure for history comparison prompts follows:

{\footnotesize
\begin{verbatim}
# Task: [Task Title]
Analyze the dates of {objectA} and {objectB},
then decide whether {objectA} is before or after
{objectB}.

# Example Format:
## Step-by-step analysis:
... (detailed reasoning here) ...
## Answer:
{objectA} is \\boxed{...} {objectB}.

# Important Notes:
- The answer should end with the sentence: 
'{objectA} is \\boxed{{...}} {objectB}.'
- The value inside the \\\\boxed{{...}} 
should be one of the following: before, 
after.
- The reasoning should be clear and consistent, 
with no conflicting statements.
\end{verbatim}
}

\paragraph{2D Geometry Prompts}

We implement three variants of geographical relationship tasks:

\begin{itemize}
    \item \textbf{US Locations}: Analyzes relative positions of real locations using latitude and longitude
    \item \textbf{Fictional Places}: Determines relative positions in a 2D coordinate system
    \item \textbf{Inconsistent Information}: Evaluates geographical relationships when presented with potentially inconsistent data
\end{itemize}

The general structure for geographical relationship prompts follows:

{\footnotesize
\begin{verbatim}
# Task: [Task Title]
[Optional context]
Analyze the relative position of {objectA} with 
respect to {objectB}.

# Example Format:
## Step-by-step analysis:
... (detailed reasoning here) ...

## Answer:
{objectA} is \boxed{...} of {objectB}.

# Important Notes:
- Answer format: '{objectA} is \boxed{...} of 
{objectB}.'
- Use only: southeast, southwest, northeast, 
northwest.
- The reasoning should be clear and consistent, 
with no conflicting statements.
\end{verbatim}
}

For the inconsistency detection variant, the prompt includes additional instructions to output ``INCONSISTENT'' when contradictory information is detected.

All prompts are designed to enforce structured reasoning and standardized output format, facilitating automated evaluation and consistency analysis across different model responses.

\paragraph{Multi-hop Kinship Prompts}
{\footnotesize
\begin{verbatim}
# Task: # Task: Family Relationship Analysis

1. Z is the son of Y.
2. C is the son of E.
3. A is the daughter of F.
4. D is the father of B.
5. X is the daughter of A.
6. C is the father of Z.
7. E is the husband of F.
8. B is the son of A.
9. W is the daughter of D.
10. G is the daughter of X.

Analyze the family relationship between {objectA} and {objectB}.

# Example Format:
## Step-by-step analysis:
... (detailed reasoning here) ...

## Answer:
{objectA} is the \boxed{...} of {objectB}.

# Important Notes:
- The answer should end with the sentence: 
  `{objectA} is the \boxed{...} of {objectB}.'
- The value inside the \boxed{...} should be one 
  of the following:
  husband, wife, father, mother, son, daughter, 
  brother, sister, uncle, aunt, cousin, niece, 
  nephew, grandpa, grandma, grandson, 
  granddaughter, brother-in-law, sister-in-law, 
  father-in-law, mother-in-law, son-in-law, 
  daughter-in-law, aunt-in-law, uncle-in-law, 
  niece-in-law, nephew-in-law, great-grandpa, 
  great-grandma, great-grandson, 
  great-granddaughter.
- The reasoning should be clear and consistent, 
  with no conflicting statements.
\end{verbatim}
}

\subsubsection{Case Study in LLM's responses}

\begin{figure}[ht]
    \centering
    \includegraphics[width=\linewidth]{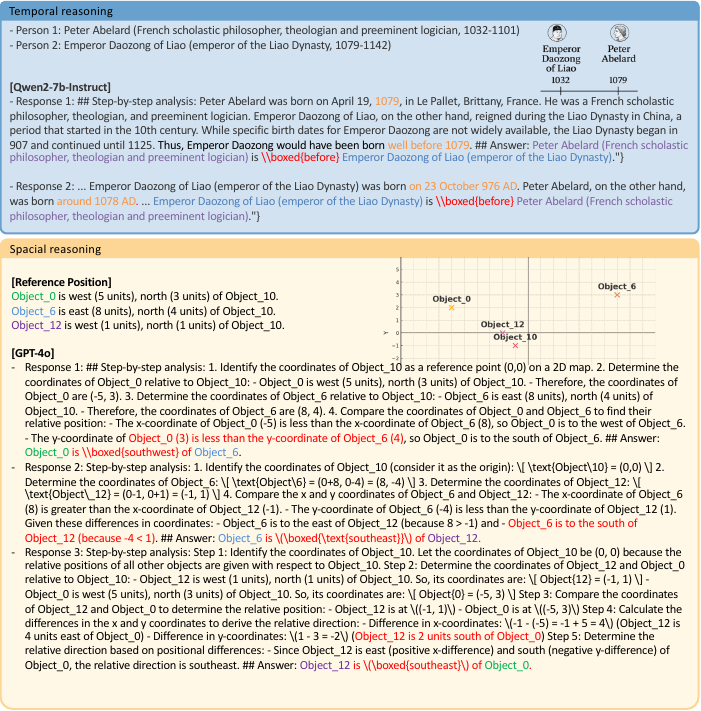}
    \caption{A case study of inconsistencies found.}
    \label{fig:failure_case}
\end{figure}
Figure~\ref{fig:failure_case} shows an example of models that provide inconsistent responses.
Through case studies on LLM's responses, we have found that there are some main reasons for inconsistency: 
\begin{itemize}
    \item Factual Inconsistencies: Models may think an event happens on different dates.
    \item Bias: models prefer a specific relationship over others. E.g. The number of answers ``before'' more than ``after''.
    \item Error within the reasoning process, especially when the model exchanges the analysis order of objects (e.g. Ask the relation of A to B, model first analyzes B then A, and in the final answer exchange it A to B but not flipping the relations.) or the analysis includes calculation of minus values.
\end{itemize}

\subsection{Limitations}
\label{app:limit}
Despite the new findings on LLM self-consistency, our work has several limitations that should be addressed in future research.

First, self-consistency remains a challenging and nascent problem. While our study focuses on relatively simple 1D and 2D binary relationships, which we demonstrate are often problematic even for state-of-the-art large language models (LLMs), self-consistency encompasses broader aspects. These include the internal consistency of mathematical proof derivations, reasoning within specific domains (e.g., law, medicine, finance), and natural language inference tasks (such as role-playing games like Werewolf). Our current work primarily introduces an evaluation metric and consistency-fixing methods for simpler tasks, and while these methods are extendable to common temporal and spatial relationships, a more comprehensive framework for assessing self-consistency in more complex domains remains an open challenge.

Second, our work highlights the importance of LLM interpretability, requiring both self-consistency and alignment with reality. However, our approach is mainly focused on achieving self-consistency, and we have not yet identified effective methods to simultaneously ensure both self-consistency and alignment with reality. We suspect this may represent an inherent difficulty, possibly necessitating new model architectures or specialized training tasks to address both aspects together.

\subsection{Other Interesting Experimental Results}
\subsubsection{Do LLMs aware of inconsistent relations?}

Under the ``2D fictional geographical task Ordered-Rel. setting'', we give $n$ relationships for $n$ objects that are contradictory, and ask the model to ``infer the relation between two objects'' or ``points out that the relationships are inconsistent''. We use 380 samples, and test to see if the model can recognize the contradiction.

\begin{table}[ht]
\centering
\begin{adjustbox}{max width=\linewidth}
\begin{tabular}{lccccccccccc}
\toprule
Model & GPT-4o & o1-mini & o4-mini & DS-V3 & DS-R1 & DS-V2-16B & DS-R1-Qwen & DS-R1-Llama & Llama-3-8B & Qwen2-7B \\
\midrule
Success (\%) & 15.00 & 91.32 & \textbf{98.95} & 38.16 & 58.42 & 12.37 & 41.32 & 31.05 & 9.47 & 21.84 \\
\bottomrule
\end{tabular}
\end{adjustbox}
\caption{\label{tab:success_rates_full}
Success rate (\%) of each model in detecting inconsistencies among contradictory 2D relational inputs.
}
\end{table}

For Table~\ref{tab:success_rates_full}, we have found that: models such as GPT-4o, DeepSeek-V2-16B, and Llama-3-8B-Instruct demonstrate very weak ability to recognize inconsistent relations, with success rates below 15\%. In contrast, reasoning models like o1-mini and o4-mini can flag contradictions. Notably, both instruction tuning and model size also affect this ability.

\subsubsection{2D Consistency Fixing Results}
\label{app:2d-fix-result}
Do fixing results of non-fictional datasets align with the reality? 
After applying the graph method for inconsistency fixing, we can visualize the fixing results and compare them with the reality. The results of all models on the US State dataset are illustrated in Figure~\ref{fig:appendix-inconsistency-fixing}. We can see that subfigure \ref{app:gpt-4o} to \ref{app:r1} capture the reality well, while \ref{app:v2} to \ref{app:distill} is close to random.

\begin{figure*}[ht]
    \centering
    \begin{subfigure}[b]{0.3\textwidth}
        \includegraphics[width=\textwidth]{paper_figure/2D_map-reference.pdf}
        \caption{Reference(ground-truth)}
    \end{subfigure}
    
    \begin{subfigure}[b]{0.3\textwidth}
        \includegraphics[width=\textwidth]{paper_figure/2D_map-gpt-4o.pdf}
        \caption{GPT-4o}
        \label{app:gpt-4o}
    \end{subfigure}
    \begin{subfigure}[b]{0.3\textwidth}
        \includegraphics[width=\textwidth]{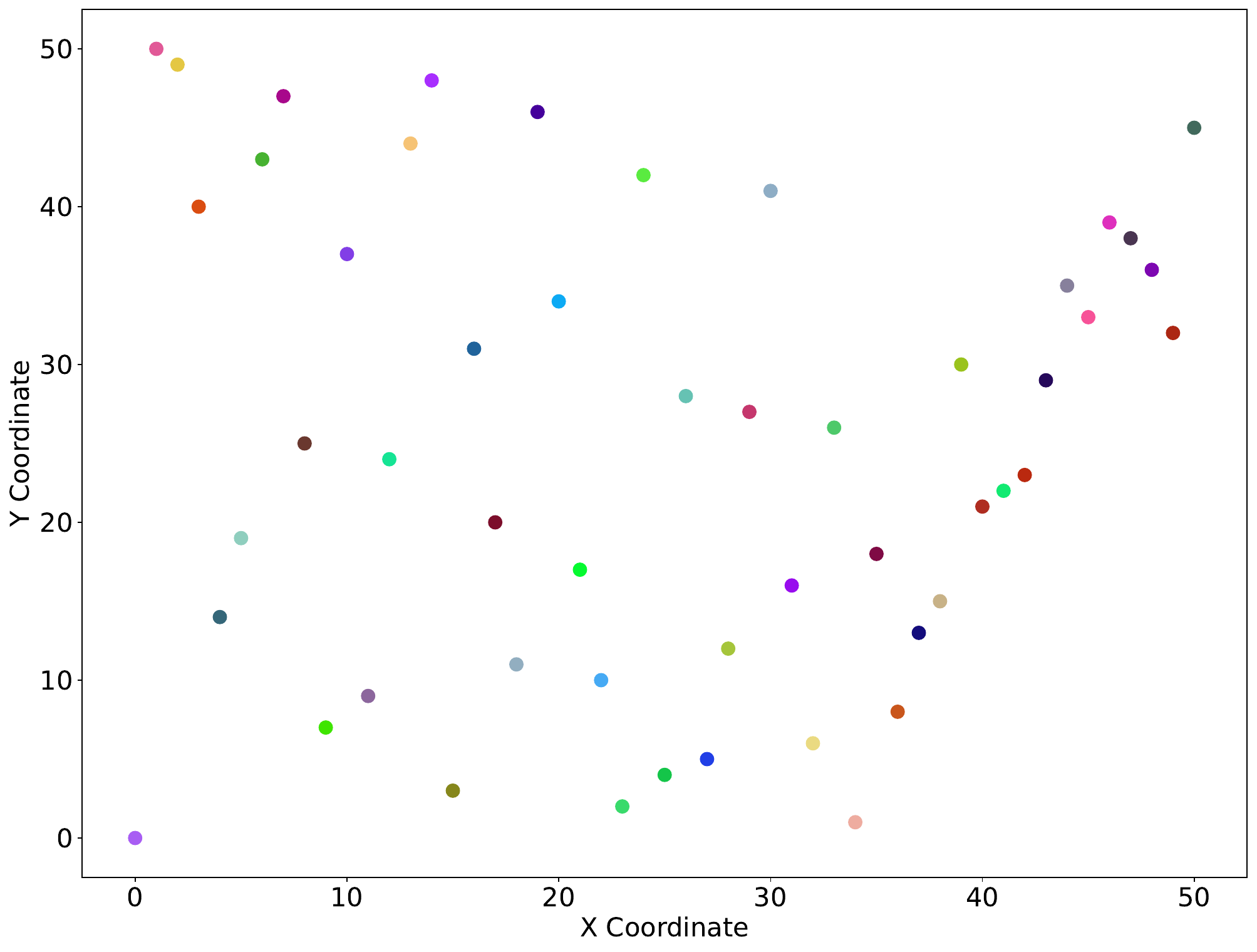}
        \caption{GPT-o1-mini}
    \end{subfigure}
            \begin{subfigure}[b]{0.3\textwidth}
        \includegraphics[width=\textwidth]{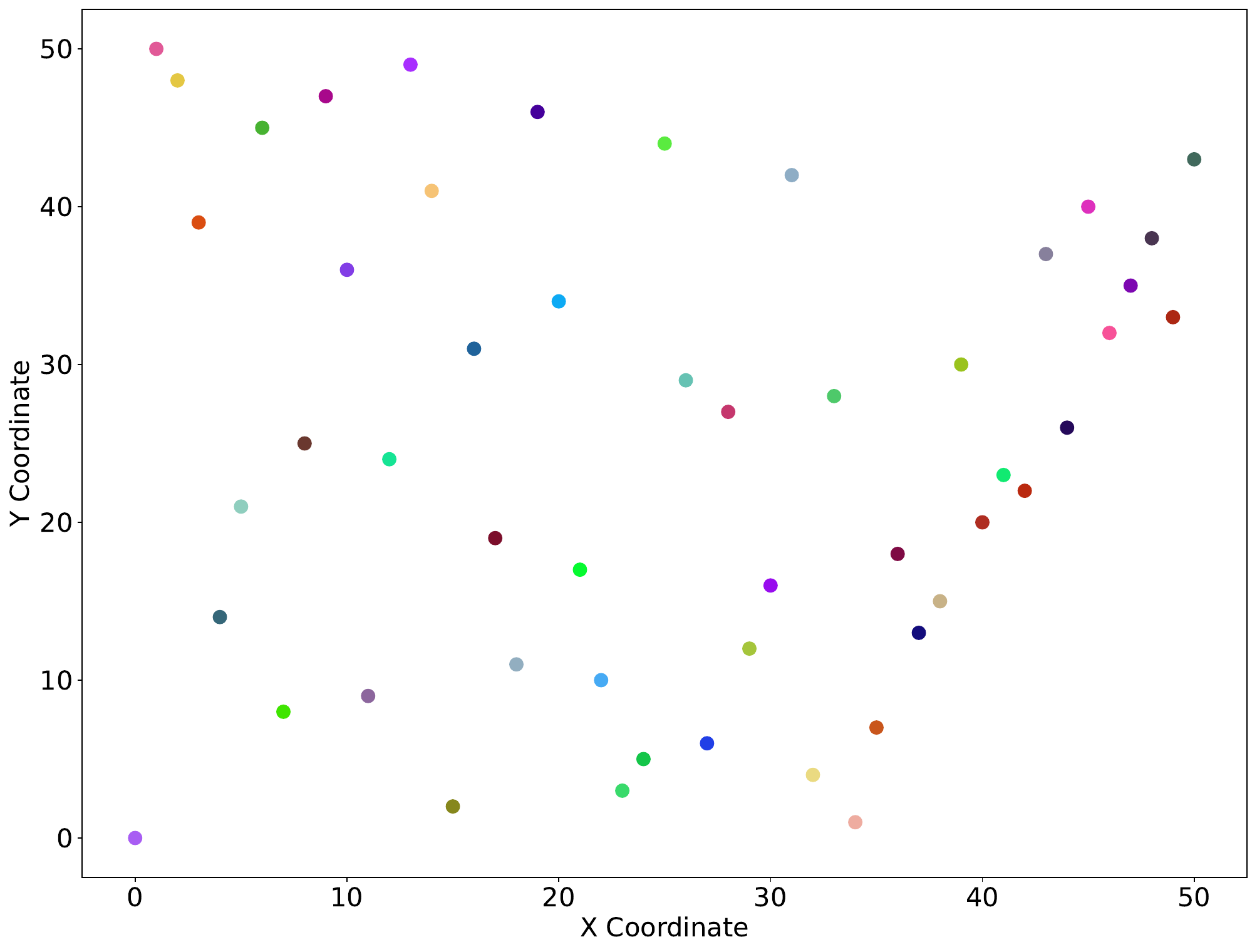}
        \caption{GPT-o4-mini}
    \end{subfigure}


    \begin{subfigure}[b]{0.3\textwidth}
        \includegraphics[width=\textwidth]{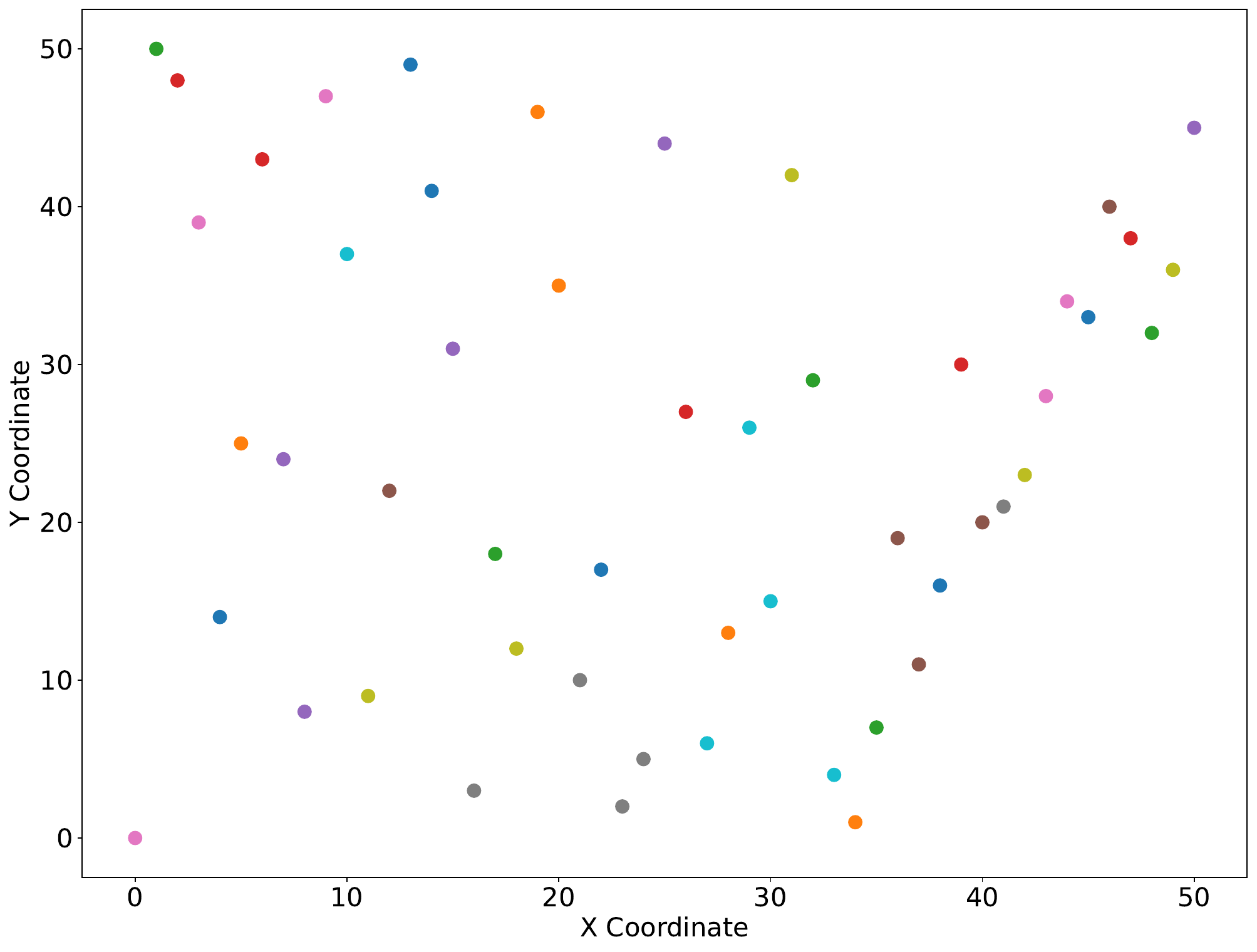}
        \caption{DeepSeek-V3}
    \end{subfigure}
    \begin{subfigure}[b]{0.3\textwidth}
        \includegraphics[width=\textwidth]{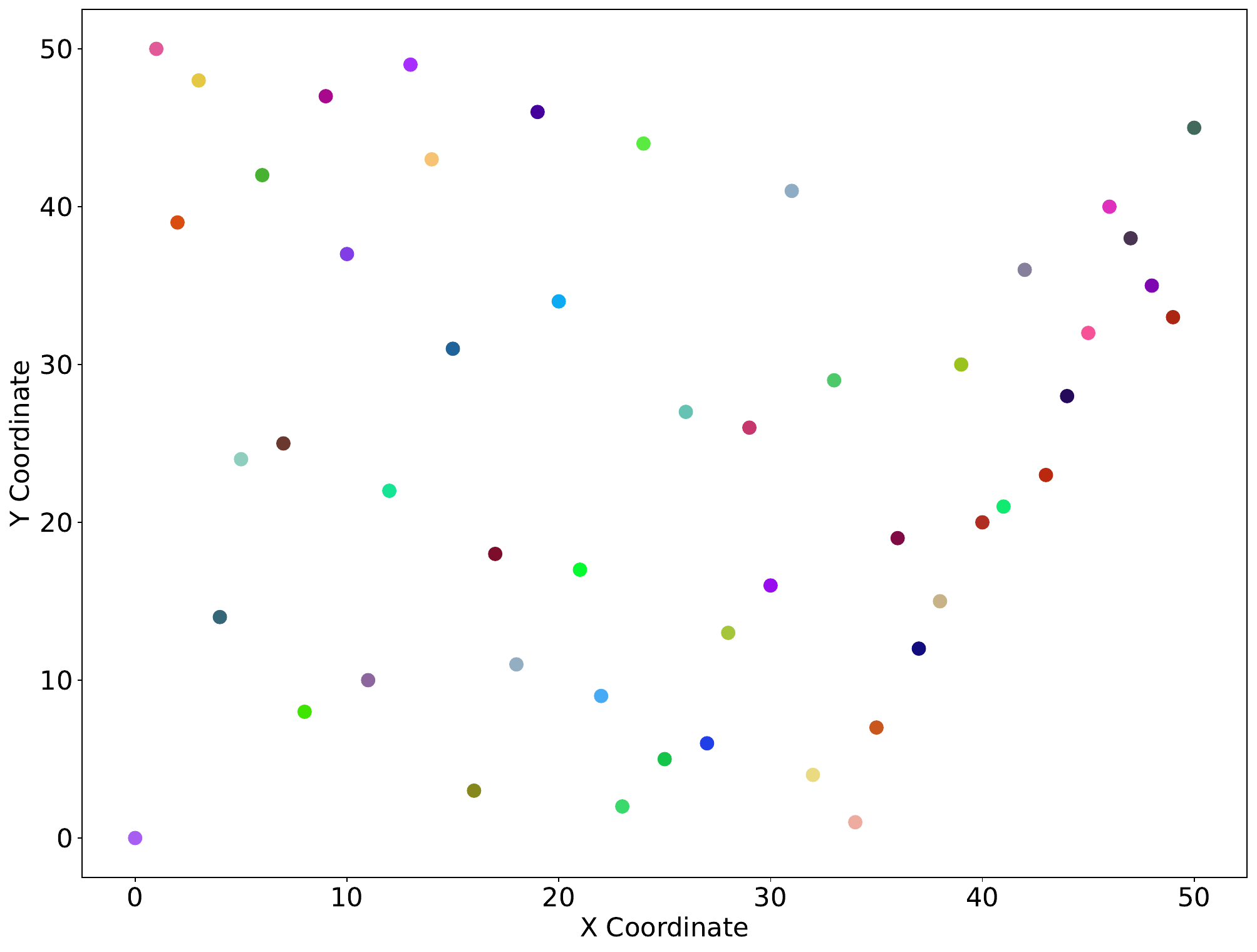}
        \caption{DeepSeek-R1}
        \label{app:r1}
    \end{subfigure}
    \begin{subfigure}[b]{0.3\textwidth}
        \includegraphics[width=\textwidth]{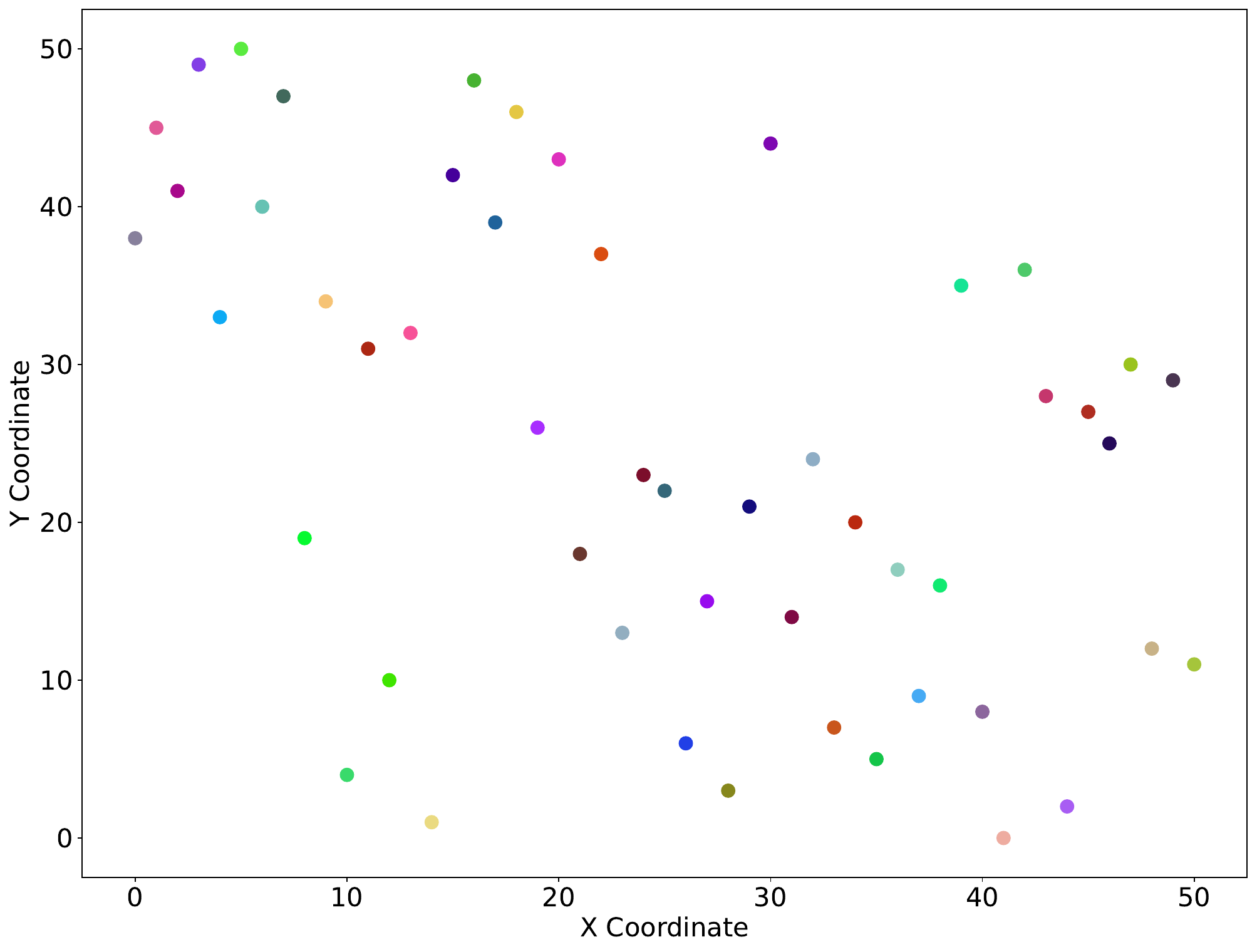}
        \caption{DeepSeek-V2}
        \label{app:v2}
    \end{subfigure}

    \begin{subfigure}[b]{0.3\textwidth}
        \includegraphics[width=\textwidth]{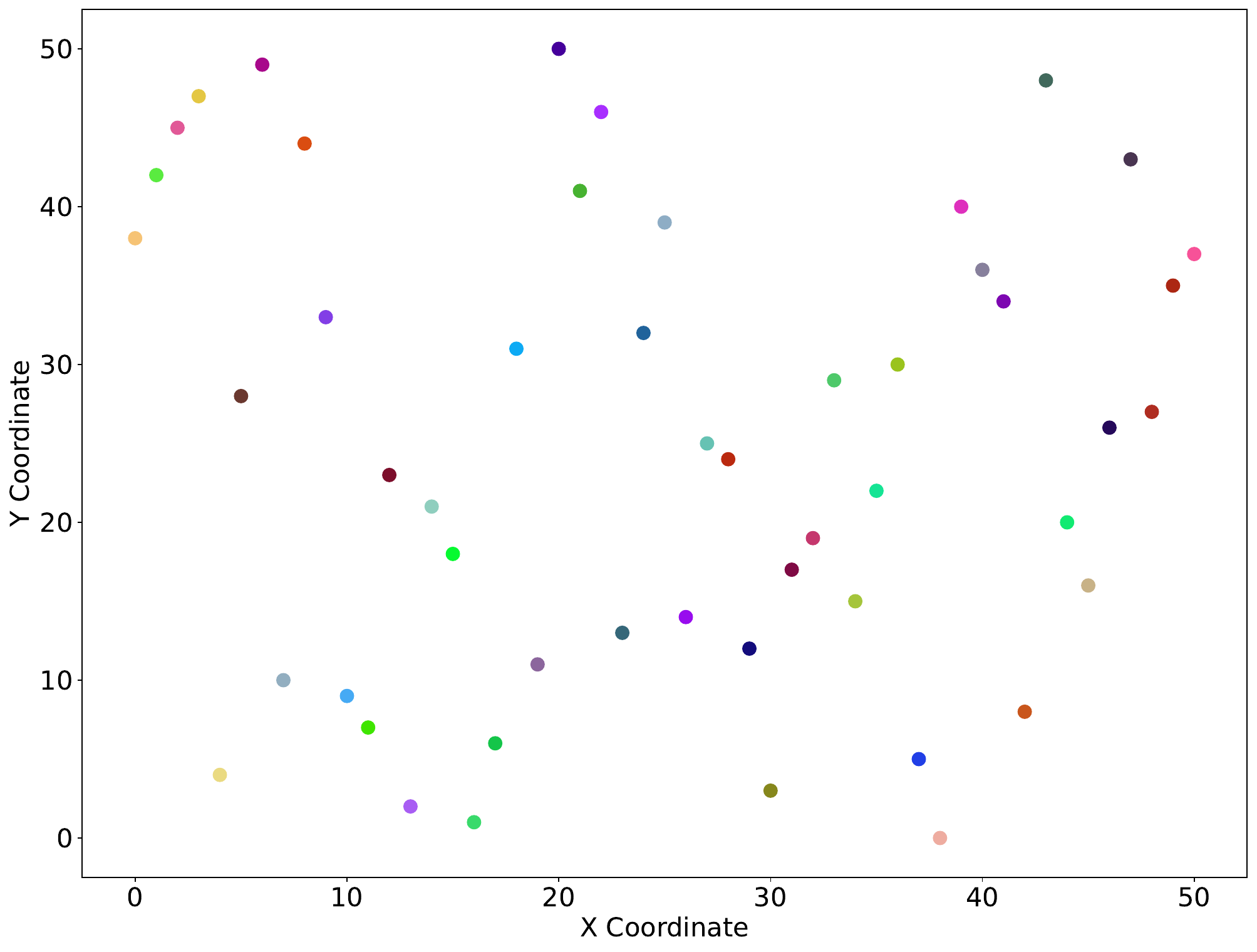}
        \caption{Llama3-8B-Instruct}
    \end{subfigure}
    \begin{subfigure}[b]{0.3\textwidth}
        \includegraphics[width=\textwidth]{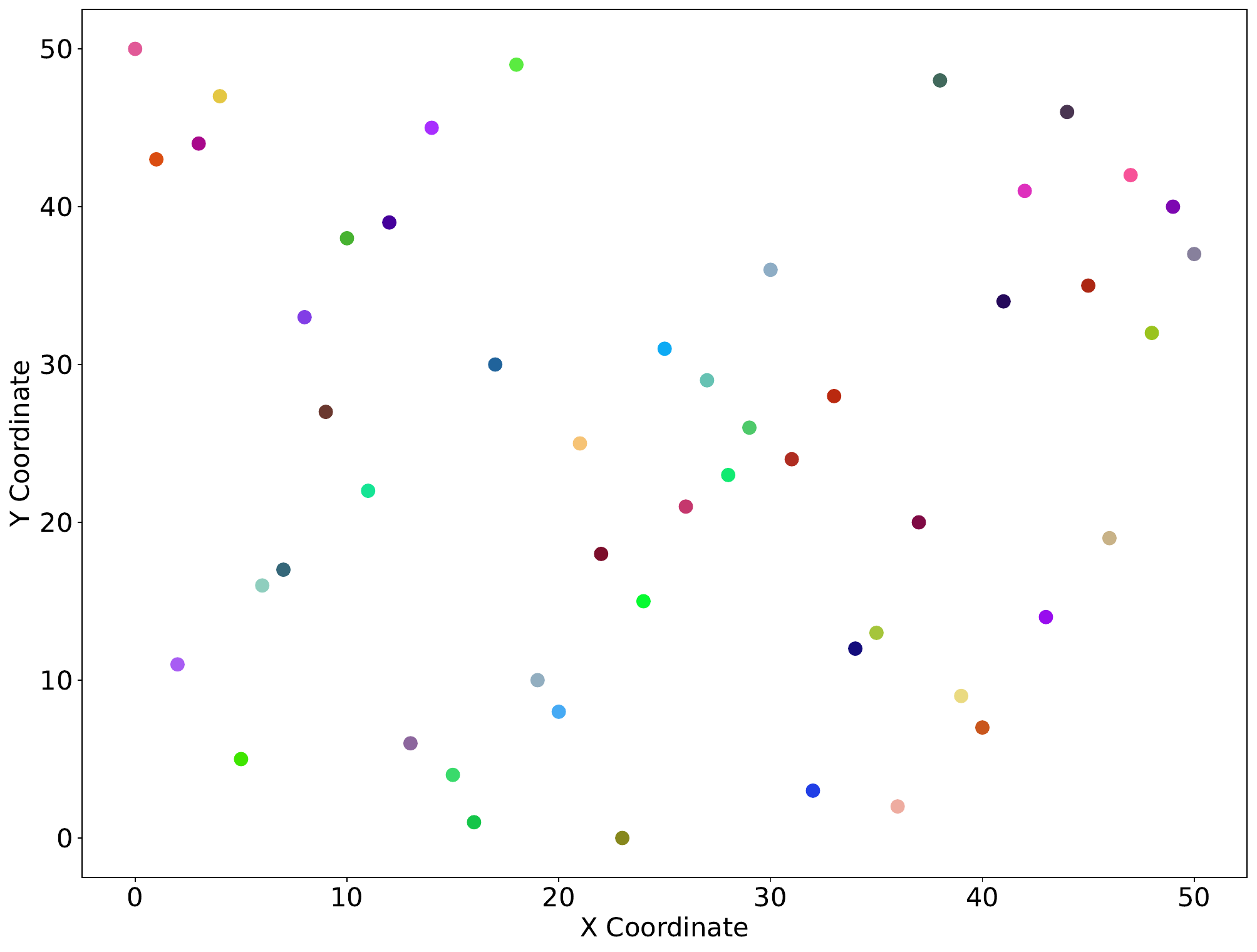}
        \caption{DeepSeek-R1-Distill-Qwen}
    \end{subfigure}
    \begin{subfigure}[b]{0.3\textwidth}
        \includegraphics[width=\textwidth]{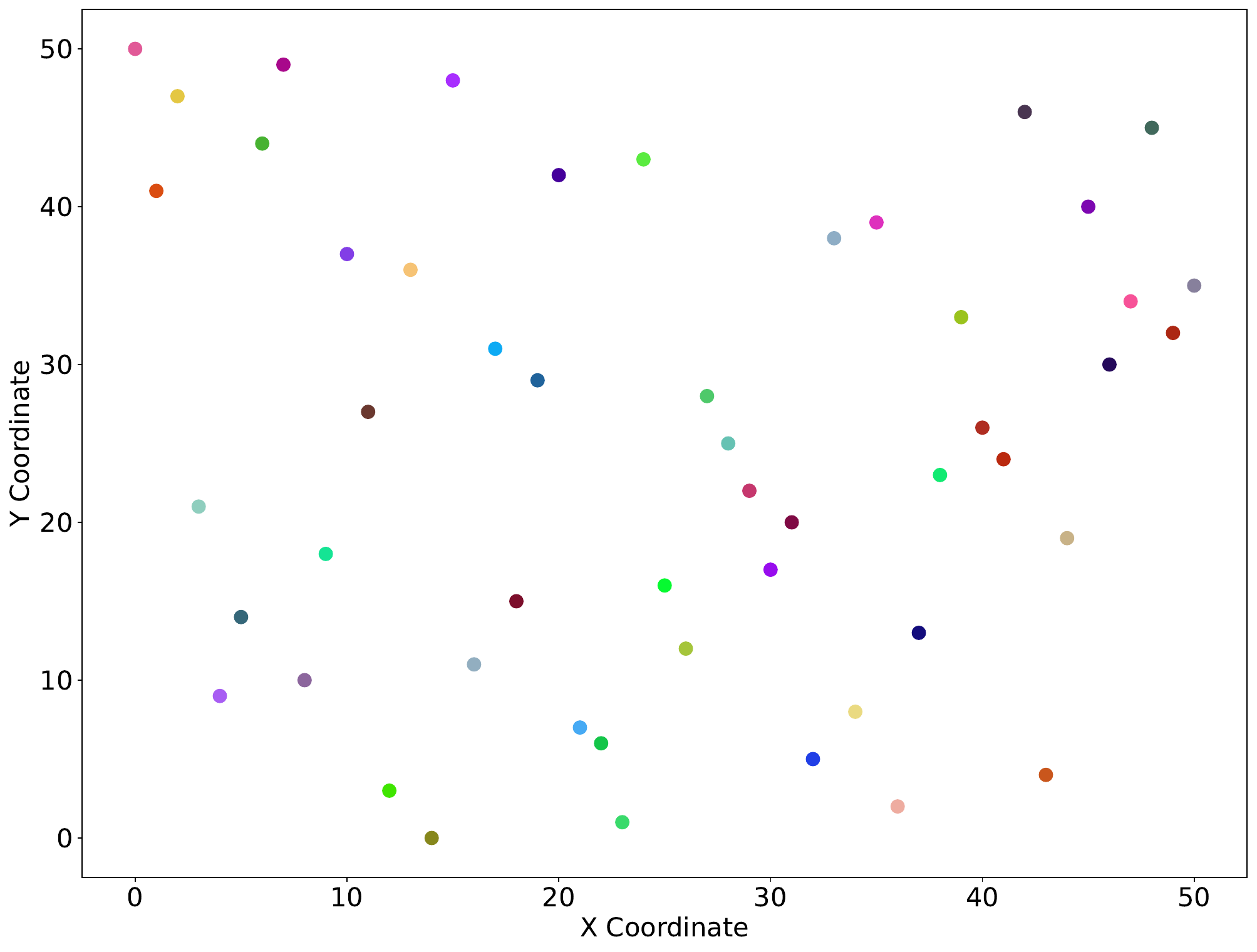}
        \caption{DeepSeek-R1-Distill-Llama}
        \label{app:distill}
    \end{subfigure}

    \begin{subfigure}[b]{0.3\textwidth}
        \includegraphics[width=\textwidth]{paper_figure/2D_map-Qwen2-7B-Chat.pdf}
        \caption{Qwen2-7B-Chat}
    \end{subfigure}
    \begin{subfigure}[b]{0.3\textwidth}
        \includegraphics[width=\textwidth]{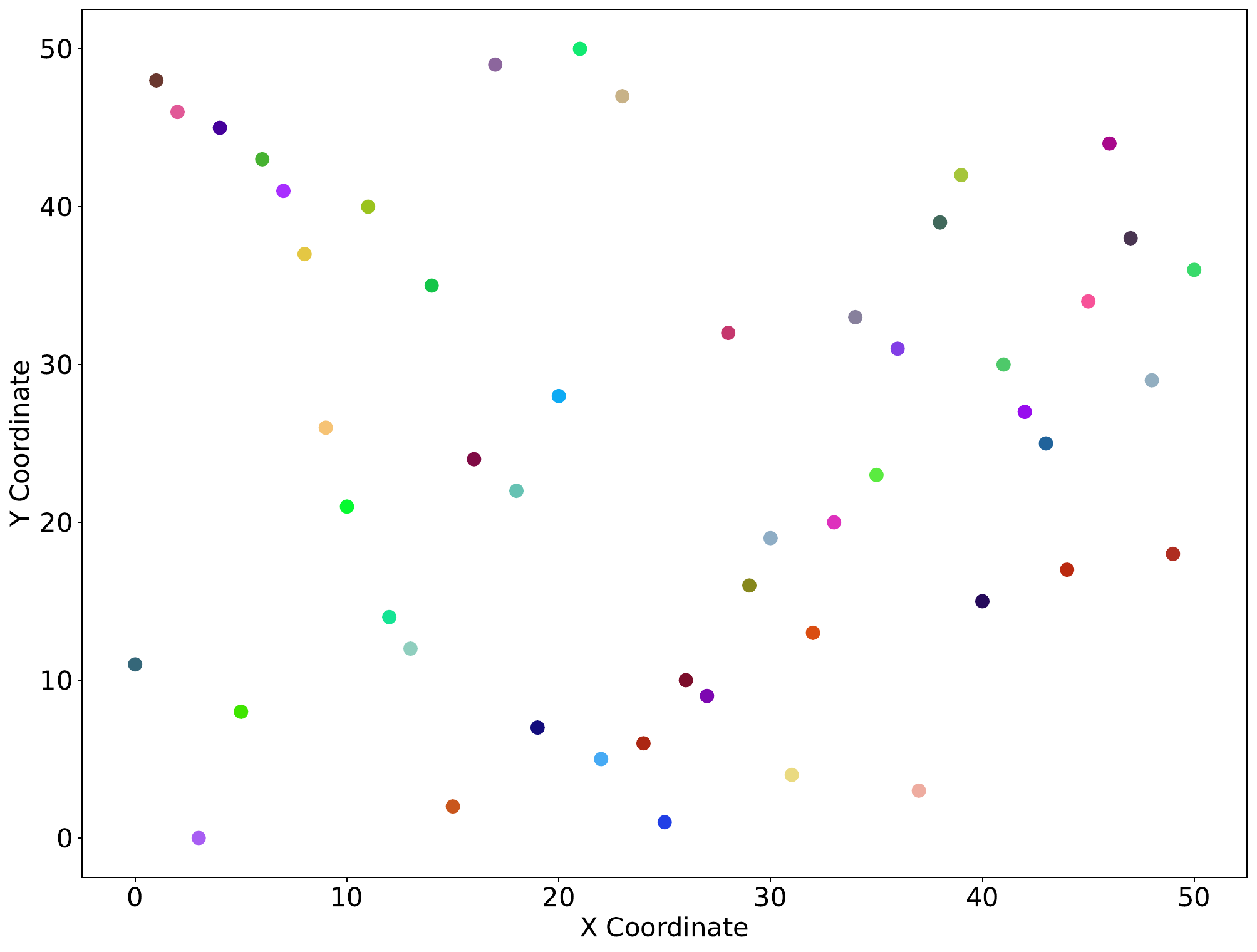}
        \caption{Qwen3-30B-A3B}
    \end{subfigure}
    \begin{subfigure}[b]{0.3\textwidth}
        \includegraphics[width=\textwidth]{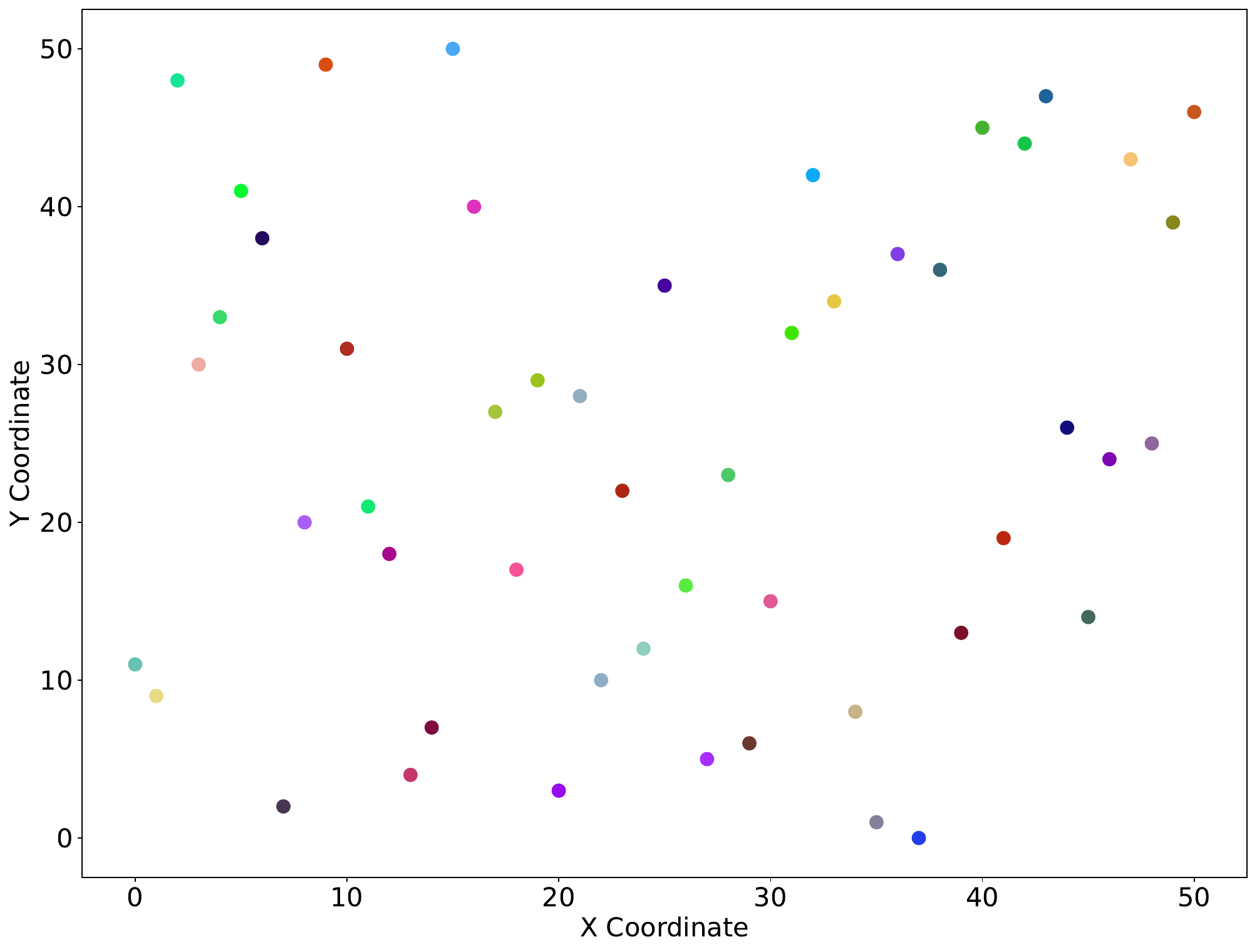}
        \caption{Qwen3-235B-A22B}
    \end{subfigure}

    \caption{2D Inconsistency Fixing Results}
    \label{fig:appendix-inconsistency-fixing}
\end{figure*}

\subsubsection{Reality Alignment: A Pilot Study}

\paragraph{Reality Alignment with Few Ground Truth}

Assuming the model has formed a self-consistent understanding of relationships among $n$ entities, but its understanding deviates from reality. How can we edit the model to align with reality while preserving internal self-consistency? 

The intuition stems from human learning, where inconsistencies are typically addressed by sampling a small subset of errors and correcting them. 
Our core approach involves verifying a few sampled critical nodes and edges against ground truth and leveraging transitivity to adjust other related edges accordingly, which is illustrated by Algorithm \ref{alg:alignment}. 

\begin{algorithm}[ht]
\caption{Iterative Consistent Alignment}
\label{alg:alignment}
\begin{algorithmic}
\STATE \textbf{Input:} Graph $G=(V,E)$, true ordering $\tau$
\STATE \textbf{Output:} Corrected graph $G'$
\WHILE{true}
    \STATE $\pi \leftarrow \text{NodeOrderingAlgorithm}(G)$\COMMENT{Current ordering}
    \STATE $R \leftarrow \text{FindReverseEdges}(G, \pi)$
    \IF{$R = \emptyset$}
        \STATE \textbf{break}
    \ENDIF
    
    \STATE $G \leftarrow \text{CorrectReverseEdges}(G, R, \tau)$
    \STATE $G \leftarrow \text{ApplyTransitivity}(G)$
    
    \STATE $v^* \leftarrow \arg\max_{v \in V} |\pi(v) - \tau(v)|$ \COMMENT{Find max displaced node}
    \STATE $d \leftarrow \pi(v^*) - \tau(v^*)$ \COMMENT{Displacement}
    
    \STATE $e_{fact} \leftarrow \text{SampleFactEdge}(v^*, d, \tau)$
    \STATE $G \leftarrow G \cup \{e_{fact}\}$ \COMMENT{Add fact edge}
    \STATE $G \leftarrow \text{ApplyTransitivity}(G)$
\ENDWHILE
\STATE \textbf{Return} $G$
\end{algorithmic}
\end{algorithm}

\begin{table}[ht]
    \centering
    \begin{adjustbox}{max width=\linewidth}
    \begin{tabular}{lccccccc}
        \toprule
        Model & x\_gt & x\_trans & x\_model & y\_gt & y\_trans & y\_model \\
        \midrule
        gpt-4o               & 14   & 3  & 1258 & 22  & 8   & 1244 \\
        o1-mini              & 27   & 8  & 1240 & 40  & 47  & 1188 \\
        DeepSeek-V3       & 6    & 1  & 1268 & 11  & 6   & 1255 \\
        \bottomrule
    \end{tabular}
    \end{adjustbox}
    \caption{Experimental results for the \texttt{us\_states} experiment when alignment with reality is achieved. Columns labeled \texttt{\_gt} represent the number of ground truth edges provided. \texttt{\_trans} denotes the number of correct edges deduced through transitivity (excluding ground truth edges), \texttt{\_model} captures the number of edges originally predicted by the models.}
    \label{tab:trans_align}
\end{table}

Table~\ref{tab:trans_align} shows the results of this algorithm on the US State dataset. The key metrics include the number of ground truth edges required (\texttt{gt}), the number of transitive edges deduced (\texttt{trans}), and the original model predictions (\texttt{model}). Most models achieved self-consistency after a single iteration, with only a small fraction requiring two iterations. This means that ``ground truth edges'' required by our method are about the number of ``the reversed edges initially detected'', which is almost the minimum of edges needed to fix the inconsistency.

This algorithm provides a potential way to align with reality with just a few ground truth knowledge given. However, in the experiments, we've found that it's hard to achieve complete self-consistency and reality alignment through ``simple LoRA finetuning on facts''. 

\paragraph{Can LLMs improve self-consistency via simple LoRA finetuning on facts?}

The smaller models are too far from reality, so we try to teach them all basic knowledge (time and positions for each object) in reality. 
The smaller models are given instructions to figure out and remember the exact time of each event, and the latitude and longitude of each place. After training on all datasets with LoRA, lr=5e-5, lr\_scheduler=cosine within 20 epochs and picking the best model (around epoch 4).

We've found that the factual accuracy of smaller models increased a lot after SFT (Table~\ref{tab:sft_know_acc}, but their inconsistency generally gets higher. As Table~\ref{tab:sft_incon} goes. The learned knowledge often fails to infer when questioned about relations. 

\begin{table}[ht]
    \centering
    \begin{adjustbox}{max width=\linewidth}
    \begin{tabular}{lccccc}
        \toprule
\textbf{Model} & \textbf{Art} & \textbf{Ancient Figure} & \textbf{Recent Figure} & \textbf{US City} & \textbf{US State} \\
        \midrule
        Llama-3-8B & 80 (+65) & 55 (+45) & 90 (+25) & 100 (+15) & 96.08 (+68.63) \\
        Qwen2-7B & 10 (+10) & 15 (+10) & 65 (+25) & 80 (+5) & 60.78 (+25.49) \\
        \bottomrule
    \end{tabular}
    \end{adjustbox}
    \caption{Factual accuracy (\%) After SFT (±compared with Before SFT). For 2D datasets, the scores are averaged for the x and y directions.}
    \label{tab:sft_know_acc}
\end{table}

\begin{table}[ht]
\centering
    \begin{adjustbox}{max width=\linewidth}
        \begin{tabular}{lccccc}
        \toprule
        \textbf{Model} & \textbf{Art} & \textbf{Ancient Figure} & \textbf{Recent Figure} & \textbf{US City} & \textbf{US State} \\
        \midrule
        Qwen2-7B & 40 (+23) & 44 (+8.7) & 44 (+12) & 42 (+16) & 42 (+13) \\
        Llama-3-8B & 15 (-1.2) & 20 (+1.2) & 18 (-0.5) & 33 (+3.2) & 33 (+0.2) \\
        \bottomrule
        \end{tabular}
        \end{adjustbox}
        \caption{Inconsistency Score (\%)  After SFT (±compared with Before SFT). For 2D datasets, the scores are averaged for the x and y directions.}
        \label{tab:sft_incon}
\end{table}

\end{document}